\documentclass[lettersize,journal]{IEEETai}

\usepackage[utf8]{inputenc}
\usepackage[T1]{fontenc}
\usepackage{times}
\usepackage{epsfig}
\usepackage{graphicx}
\usepackage{yfonts}
\usepackage{xcolor}
\usepackage{hyperref}
\usepackage[all]{hypcap}
\pagestyle{plain}
\usepackage{booktabs}
\usepackage[normalem]{ulem}
\usepackage{fancyhdr}
\useunder{\uline}{\ul}{}
\usepackage{lipsum}
\usepackage{bm,nicefrac}
\usepackage{amsmath,mathrsfs,amssymb} 
\usepackage{amsfonts}  
\usepackage{algorithm}
\usepackage{algorithmic}
\usepackage{array}
\usepackage{textcomp}
\usepackage{url}
\usepackage{verbatim}
\usepackage{amsmath,mathrsfs,amssymb} 
\usepackage{amsfonts}  
\usepackage{algorithm}
\usepackage{algorithmic}

\usepackage{siunitx}   
\usepackage{upgreek}   
\usepackage{xcolor}    
\usepackage{float}
\usepackage{subfig}
\usepackage{booktabs}       
\usepackage{threeparttable}
\usepackage{multirow}
\usepackage{multicol}
\usepackage{times}
\usepackage{hyperref}
\usepackage{bm}
\usepackage{makecell}
\usepackage{footnote}
\usepackage{ragged2e}
\usepackage{amsthm}
\usepackage{dsfont}
\usepackage{lipsum} 
\usepackage[sort&compress]{natbib}
\setcitestyle{numbers,square}

\newtheorem{theorem}{Theorem}
\newtheorem{lemma}{Lemma}

\usepackage{enumitem} 
\usepackage{pifont}
\newcommand{\cmark}{\ding{51}}%
\newcommand{\xmark}{\ding{55}}%

\def\x{\boldsymbol{x}}
\def\v{\boldsymbol{v}}

\theoremstyle{definition}

\usepackage{cuted}

\def\w{\boldsymbol{w}}
\def\W{\boldsymbol{W}}

\newcommand{\fl}[1] {\textcolor{black}{#1}}


\hyphenation{op-tical net-works semi-conduc-tor IEEE-Xplore}

\begin{document}

\title{Quadratic Neuron-empowered Heterogeneous Autoencoder for Unsupervised Anomaly Detection}

\author{Jing-Xiao~Liao, \textit{Graduate Student Member, IEEE}, Bo-Jian Hou, Hang-Cheng Dong, Hao Zhang \\
Xiaoge Zhang,  \textit{Member, IEEE},
Jinwei Sun, Shiping Zhang, Feng-Lei Fan, \textit{Member, IEEE}

\thanks{The work described in this paper was partially supported by the Direct Grant for Research from the Chinese University of Hong Kong and ITS/173/22FP from the Innovation and Technology Fund of Hong Kong, was partially supported by a grant from the Research Grants Council of the Hong Kong Special Administrative Region, China (Project No. PolyU 25206422), and was partially supported by the Research Committee of The Hong Kong Polytechnic University under project code G-UAMR and student account code RL3C. \textit{(Corresponding Authors: Shiping Zhang, Feng-Lei Fan)}}
\thanks{Jing-Xiao Liao is with School of Instrumentation Science and Engineering, Harbin Institute of Technology, Harbin, China, and is also with Department of Industrial and Systems Engineering, The Hong Kong Polytechnic University, Hong Kong, SAR of China}
\thanks{Hang-Cheng Dong, Jinwei Sun, Shiping Zhang are with School of Instrumentation Science and Engineering, Harbin Institute of Technology, Harbin, China. (email: spzhang@hit.edu.cn)}
\thanks{Bo-Jian Hou, Hao Zhang are with Weill Cornell Medicine, Cornell University, New York City, US.}
\thanks{Xiaoge Zhang is with Department of Industrial and Systems Engineering, The Hong Kong Polytechnic University, Hong Kong, SAR of China.}
\thanks{Feng-Lei Fan is with Department of Mathematics, The Chinese University of Hong Kong, Hong Kong, SAR of China. (email: hitfanfenglei@gmail.com)}}



\maketitle

\begin{abstract}
Inspired by the complexity and diversity of biological neurons, a quadratic neuron is proposed to replace the inner product in the current neuron with a simplified quadratic function. Employing such a novel type of neurons offers a new perspective on developing deep learning. When analyzing quadratic neurons, we find that there exists a function such that a heterogeneous network can approximate it well with a polynomial number of neurons but a purely conventional or quadratic network needs an exponential number of neurons to achieve the same level of error. Encouraged by this inspiring theoretical result on heterogeneous networks, we directly integrate conventional and quadratic neurons in an autoencoder to make a new type of heterogeneous autoencoders. To our best knowledge, it is the first heterogeneous autoencoder that is made of different types of neurons. {Next, we apply the proposed heterogeneous autoencoder to unsupervised anomaly detection for tabular data and bearing fault signals. The anomaly detection faces difficulties such as data unknownness, anomaly feature heterogeneity, and feature unnoticeability, which is suitable for the proposed heterogeneous autoencoder. Its high feature representation ability can characterize a variety of anomaly data (heterogeneity), discriminate the anomaly from the normal (unnoticeability), and accurately learn the distribution of normal samples (unknownness).} Experiments show that heterogeneous autoencoders perform competitively compared to other state-of-the-art models.
\end{abstract}

\begin{IEEEImpStatement}
In this work, we present a novel perspective on neural network design by introducing neuronal diversity. Firstly, we prove that neural networks that combine two types of neurons have superior theoretical approximation efficiency compared to networks comprising solely homogeneous neurons. Secondly, we develop the first autoencoder that is made of different neurons, which is a new addition to the family of autoencoders. Lastly, the proposed heterogeneous autoencoder performs better than the state-of-the-art models in tabular and bearing fault unsupervised anomaly detection. In brief, both theoretical derivation and experimental results confirm the benefit of neuronal diversity in neural network design.

\end{IEEEImpStatement}

\begin{IEEEkeywords}
Deep learning theory, heterogeneous autoencoder, quadratic neuron, anomaly detection.
\end{IEEEkeywords}

\section{Introduction}
\IEEEPARstart{D}{eep} learning has achieved great success in many important fields \cite{deeplearning}. However, so far, deep learning innovations mainly revolve around structure design such as increasing depth and proposing novel shortcut architectures \cite{densenet,9614997,10197463}. Almost exclusively, the existing advanced networks only involve the same type of neurons characterized by i) the inner product of the input and the weight vector; ii) a nonlinear activation function. For convenience, we call this type of neurons \textit{conventional neurons}.
Although these neurons can be interconnected to approximate a general function, we argue that the type of neurons useful for deep learning should not be unique. As we know, neurons in a biological neural system are diverse, collaboratively generating all kinds of intellectual behaviors. Since a neural network was initially devised to imitate a biological neural system, neuronal diversity should be carefully examined in neural network research.

In this context, a new type of neurons called \textit{quadratic neurons} was recently proposed in \cite{fan2019quadratic}, which replace the inner product in a conventional neuron with a simplified quadratic function. Hereafter, we call a network made of quadratic neurons a \textit{quadratic network} and a network made of conventional neurons a \textit{conventional network}. The superiority of quadratic networks over conventional networks in representation efficiency and capacity was demonstrated in \cite{fan2021expressivity}. Intuitively, nonlinear features ubiquitously exist in real world, and quadratic neurons are empowered with a quadratic function to represent nonlinear features efficiently and effectively. Mathematically, when the rectified linear unit (ReLU) is employed, a conventional model is a piecewise linear function, whereas a quadratic one is a more powerful piecewise nonlinear mapping. Since a quadratic neuron shows great potential, we are encouraged to keep pushing its theory and application boundaries.

Previously, it was shown that there exists a function such that a conventional network needs an exponential number of neurons to approximate, but a quadratic network only needs a polynomial number of neurons to achieve the same level of error \cite{fan2020universal}. Despite the attraction of this construction, with the help of techniques in \cite{eldan2016power}, we find that one can also construct a function that can realize the opposite situation, \textit{i.e.,} a function that is hard to approximate by a quadratic network but easy to approximate by a conventional network (Theorem \ref{thm:core}). Combining these two constructed functions, one can naturally get a function easy to approximate by a heterogeneous network of both quadratic and conventional neurons but hard by a purely conventional or quadratic network, where the difference remains polynomial vs exponential (Theorem \ref{thm:main}). Moreover, the opposite case will not happen because the purely conventional or quadratic is a degenerated case of the heterogeneous network. Such a theoretical derivation makes the employment of heterogeneous models well grounded, implicating that there are at least some tasks pretty suitable for the combination of conventional and quadratic neurons so that an exponential difference is reached. While for the general task, heterogeneous networks' performance will not be worse, since a purely conventional or quadratic network is the special case of a heterogeneous network.

This theoretical derivation agrees with our intuition. There is no one-size-fits-all artificial neuron in deep learning. Both conventional and quadratic neurons cannot perform well on all kinds of tasks. Encouraged by the inspiring theoretical result on heterogeneous networks and the idea of neuronal synergy, here we directly integrate conventional and quadratic neurons in an autoencoder to make a new type of \textit{heterogeneous autoencoders}. Autoencoders \cite{sae, fan2019quadratic, 9536242, 9582824} are widely used in unsupervised learning tasks such as feature extraction \cite{9648028}, denoising\cite{8383709}, anomaly detection \cite{9971462}, and so on. To the best of our knowledge, all the previous mainstream autoencoders use the same type of neurons regardless of their innovations. The proposed heterogeneous autoencoder is the first heterogeneous autoencoder made up of diverse neurons in the family of autoencoders. {Furthermore, we apply the heterogeneous autoencoder to solve the anomaly detection problem. 

Anomaly detection is widely applied in different contexts such as intelligent manufacturing, data management, and intelligent transportation \cite{9537728,10154127}. However, anomaly detection is challenging due to the following issues \cite{pang2021deep}:} 
\begin{itemize}
    
 \item {\textbf{Unknownness.} The anomaly features may include the unknown data, which requires an anomaly detection model to accurately construct the distribution of normal samples and be sensitive to anomalous samples.}

 \item {\textbf{Heterogeneity.} Anomalies are irregular, and different anomalies may display completely different abnormal features. For instance, when detecting network traffic attacks, four types of attacks are all anomalies (DOS, R2L, U2R, Probing).}

 \item {\textbf{Unnoticeability.}  Anomalies are typically rare and are easily overwhelmed by normal instances. Therefore, it is difficult to detect the anomaly. }

 \end{itemize}

{Based on the above analyses, unsupervised anomaly detection highly relies on the models' ability to feature representation. A model with a high feature representation ability can characterize a variety of anomaly data (heterogeneity), discriminate the anomaly from the normal (unnoticeability), and accurately learn the distribution of normal samples (unknownness). Our theory indicates that a heterogeneous network has a more powerful representation ability than a homogeneous network; therefore, a heterogeneous network is suitable to address the anomaly detection problem.} Experiments demonstrate that a heterogeneous autoencoder {achieves competitive performance} relative to other state-of-the-art on anomaly detection datasets and {real-world bearing fault detection.} In summary, our contributions are

\begin{enumerate}
    \item We prove that to approximate a constructed function, a heterogeneous network is more efficient and has a more powerful capability than a homogeneous one. Different from ensemble learning, our result characterizes the ability of heterogeneous networks from the perspective of approximation theory, which is a novel theoretical angle and paves a way for the employment of heterogeneous networks.

    \item To further verify our theory, we develop a first-of-its-kind autoencoder that integrates essentially different types of neurons in one model, which is a methodological contribution to autoencoder design. 
    
    \item Experiments suggest that the proposed heterogeneous autoencoder delivers competitive performance compared to homogeneous autoencoders and other baselines on unsupervised tabular data anomaly detection.

\end{enumerate}

\section{Related Works \label{sec:related works}}

\fl{\textbf{Anomaly Detection (AD)} algorithms strive to identify outliers that deviate significantly from the majority of data~\cite{han2022adbench}. This study concentrates on unsupervised AD, which exclusively utilizes samples from the normal class to construct a model for outlier detection~\cite{10314785}. Broadly, AD algorithms can be categorized into traditional machine learning (shallow) and deep learning-based (deep) methods~\cite {ruff2021unifying}. First, traditional machine learning methods are roughly summarised as follows: i) Classification-based method, one-class support vector machine (OCSVM)~\cite{ocsvm} and support vector data description (SVDD)~\cite{tax2004support}; ii) Probabilistic-based method, Gaussian mixture model (GMM)~\cite{reynolds2009gaussian} and kernel density estimation (KDE)~\cite{parzen1962estimation}; iii) Reconstruction-based method, probabilistic principal component analysis (p-PCA)~\cite{tipping1999probabilistic}; iv) Distance-based method, local outlier factor (LOF)~\cite{breunig2000lof} and empirical cumulative outlier detection (ECOD)~\cite{li2022ecod}.}
 
\fl{On the other hand, deep learning-based methods that are capable of handling larger and high-dimensional data have recently garnered increased attention. Compared to machine learning methods, deep neural network methods demonstrate the superior generalizability to various types of data. Several approaches integrated deep neural networks into machine learning methods, such as DeepSVDD~\cite{ruff2018deep}, deep autoencoding GMM (DAGMM)~\cite{zong2018deep}, and deep autoencoding support vector data descriptor (DASVDD)~\cite{10314785}. These approaches are predicated on certain prior assumptions. Conversely, other approaches adopted pure deep neural networks to construct the end-to-end AD models without requiring any prior knowledge. The primary objective of these approaches is to enhance the performance and generalizability of AD methods across a broad spectrum of data. The most representative methods are autoencoders (AEs)~\cite{yu2024adversarial, yao2024svd} and generative adversarial networks (GANs)~\cite{schlegl2017unsupervised,9850373}, which automatically learn the latent features of normal samples to enlarge their differences from the outliers in the latent space.}

\fl{This paper specifically focuses on autoencoders for anomaly detection. Compared to GANs, AEs can primarily be adopted for anomaly detection and exhibit lower computational complexity. This also facilitates the easier integration of different neurons.}

\fl{\textbf{Autoencoders (AEs)} constitute a vibrant research field in machine learning and have attained significant success in a variety of tasks, including data generation~\cite{10054417,ojo2024topic}, fault diagnosis~\cite{9536242,9648028}, and anomaly detection~\cite{yu2024adversarial, yao2024svd}. Presently, autoencoders, as a class of reconstruction-based nonlinear dimension reduction models, have emerged as the most extensively adopted methods for unsupervised anomaly detection, owing to their lightweight and user-friendly variants~\cite{ruff2021unifying}. The advantages of autoencoders, such as automatic latent feature extraction, reduction of dimensional curses, and potent non-linear representation ability, have been proven beneficial in addressing the challenges associated with anomaly detection. Recent developments in autoencoders can be categorized into two aspects. i) Ensemble learning has been adapted to the autoencoder: Pan \textit{et al.} \cite{pan2021synchronized} proposed a synchronized heterogeneous autoencoder that uses an item-oriented and a user-oriented autoencoder to serve as a teacher and a student, respectively, towards the distillation of both label-level and feature-level knowledge. Some studies combine multiple autoencoders for heterogeneous data, \textit{e.g.}, combining a denoising autoencoder and a recurrent autoencoder for non-sequential and sequential data, respectively \cite{li2018deep}. ii) Some novel metrics to measure the reconstruction error: Hojjati \textit{et al.}~\cite{10314785} proposed an anomaly score that combines the reconstruction error and the distance of the enclosing hypersphere in the latent representation, and constructed an autoencoder called DASVDD. The low-rank and sparse priors were integrated into the loss function of deep autoencoders and employed in the hyperspectral anomaly detection~\cite{10285414}.}

\fl{However, regardless of representation regularization, architecture modification, and direct combination, these autoencoders are built upon the same type of neurons, which are conventional neurons. In contrast, the proposed heterogeneous autoencoders design an autoencoder by replacing neuron types, which provides a fresh perspective on autoencoder design.}

\fl{\textbf{Heterogeneous neural networks}, characterized by the integration of various sub-networks or models, have shown significant potential in tackling complex scenarios such as big data and multi-modality. Several notable works include the heterogeneous Rybak neural network (HRYNN) introduced by Qi et al., which is composed of multiple Rybak neural networks (RYNNs) has demonstrated excellent image enhancement effects, particularly on the image edge details~\cite{qi2021new}. Sadr et al. constructed heterogeneous neural networks by integrating intermediate features from convolutional and recursive neural networks, exhibiting superior efficiency and generalization performance in sentiment analysis~\cite{sadr2020multi}. Another example is a heterogeneous neural network that integrates a hierarchical fused neural fuzzy and neural network, providing effective representation learning of users and items for multiple recommendation problems~\cite{pham2023hierarchical}.}

\fl{On the other hand, some multimodal large language models (MLLMs), despite not being explicitly named “heterogeneous networks”, can be broadly considered as a variant of heterogeneous neural network~\cite{fu2023mme}. These models utilize different sub-networks to handle data of different modalities. For instance, contrastive language-image pre-training (CLIP)~\cite{radford2021learning} and OpenCLIP~\cite{cherti2023reproducible} construct both image and text encoders to train image-text pairs. These large multimodal models have shown powerful performance in various downstream tasks, such as zero-shot classification, retrieval, linear probing, and end-to-end fine-tuning. The concept of heterogeneous networks is also implied in some text-to-image diffusion models. For example, ControlNet injects additional conditions into a neural network block and incorporates it as a branch of the large pre-trained stable diffusion model~\cite{rombach2022high} for fine-tuning~\cite{zhang2023adding}. Another text-conditional image diffusion model, RAPHAEL, is capable of generating highly artistic images through text descriptions. It stacks space and time mixture-of-experts (MoEs) layers, which consist of distinguished neural structures and construct billions of diffusion paths from the input to the output~\cite{xue2024raphael}.}

\fl{In summary, the principle of heterogeneity is prevalent in the design of deep learning networks. Integrating diverse architecture is beneficial in handling complex tasks. This characteristic is also advantageous for unsupervised AD, which requires a strong representation ability to represent distinguished latent features. Our heterogeneous network is different from the above in terms of that the heterogeneity lies in the incorporation of two kinds of neurons, providing superior representation of both linear and quadratic features.}

{\textbf{Polynomial networks.} Recently, higher-order units were revisited in the context of deep learning \cite{zoumpourlis2017non,pnet}. 1) Chrysos \textit{et al.} \cite{pnet, chrysos2021deep} successfully applied the high-order units into a deep network by reducing the complexity of the individual high-order unit via tensor decomposition. They investigated the architecture of polynomial networks and achieved state-of-the-art results in image classification \cite{chrysos2022augmenting}, as well as its robustness property \cite{rocamora2022sound}. To achieve the parametric efficiency, a plethora of polynomial neuron studies focused on quadratic neurons.
Table \ref{tab:qneurons} outlines the latest-proposed quadratic neurons. The complexity of neurons in \cite{zoumpourlis2017non, micikevicius2017mixed,jiang2020nonlinear,mantini2021cqnn} is of $\mathcal{O}(n^2)$, but the quadratic neuron we use has only $\mathcal{O}(3n)$ parameters and scales well when a network becomes deep. The neurons in \cite{goyal2020improved,bu2021quadratic,xu2022quadralib, pnet} are a special case of \cite{fan2019quadratic}, which means that this neuron is expressive. Therefore, we think that the quadratic neuron we use achieves an excellent balance between scalability and expressivity.
2) Neurons with polynomial activation \cite{livni2014computational} can also lead to a polynomial network. However, some experiments \cite{fan2021expressivity} show that using polynomial activation often causes a network not to converge and cannot obtain meaningful results.} \\

\begin{table}[htbp]
\centering
\caption{A summary of the latest-proposed quadratic neurons. $\sigma(\cdot)$ denotes the nonlinear activation function. $\odot$ denotes Hadamard product. Here, $\W \in \mathbb{R}^{n\times n}$, $\w, \w^r, \w^g, \w^b \in \mathbb{R}^{n\times 1}$, and we omit the bias terms in these neurons.}
\scalebox{0.94}{
\begin{tabular}{ll}
\toprule
Papers           & Formulations          \\ \midrule
Zoumpourlis \textit{et al.}(2017) \cite{zoumpourlis2017non, micikevicius2017mixed} & $\sigma(\x^{\top}\W\x+\x^\top\w )$               \\ 
Jiang \textit{et al.}(2019) \cite{jiang2020nonlinear}      & \multirow{2}{*}{$\sigma(\x^{\top}\W\x$)} \\ 
Mantini\&Shah(2021) \cite{mantini2021cqnn}     &                        \\ 
Goyal \textit{et al.}(2020) \cite{goyal2020improved}    & $\sigma((\x\odot\x)^\top \w)$               \\ 
Bu\&Karpatne(2021) \cite{bu2021quadratic}       & $\sigma((\x^\top \w^r)\cdot(\x^\top \w^g))$             \\ 
Xu \textit{et al.}(2022) \cite{xu2022quadralib} & $\sigma((\x^\top \w^r)\cdot (\x^\top \w^g)+\x^\top\w^b)$ \\

Fan \textit{et al.}(2018)   \cite{fan2018new}      & $\sigma((\boldsymbol{x}^\top\boldsymbol{w}^{r})(\boldsymbol{x}^\top\boldsymbol{w}^{g})+(\boldsymbol{x}\odot\boldsymbol{x})^\top\boldsymbol{w}^{b})$        \\ \bottomrule
\end{tabular}}
\label{tab:qneurons}
\end{table}

There exist non-linear CNN models that use bilinear or trilinear attention modules, which are popular in image classification and reconstruction tasks \cite{hu2019vision, kong2017low}. These models enhance their representation abilities by devising novel non-linear attention modules or layers. The equations for these attention modules resemble polynomial neurons, such as $\x^\top \W\x$ \cite{yu2018hierarchical} or $((\W\x^\top)\x)\x$ \cite{zheng2019looking}. For example, Sun et al. proposed a higher-order polynomial transformer (HP-Transformer) for fine-grained freezing of gait patterns \cite{10100692}. HP-Transformer focuses on high-order gait patterns. However, they differ fundamentally from polynomial neurons. The attention modules implement non-linear operations through the sum and product of convolution layers, which provide a global attention mechanism. Polynomial neurons are point-wise and provide a local self-attention mechanism \cite{liao2022attention}. This local attention is efficient and lightweight in extracting latent features from tabular datasets compared to attention modules. Therefore, we develop quadratic neurons for anomaly detection tasks.

{This work is essentially different from our previous studies. In \cite{fan2019quadratic}, we introduced a quadratic convolutional autoencoder to address the issue of denoising in low-dose computed tomography. \cite{fan2021expressivity} highlighted the expressivity of quadratic networks by using the spline theory and a measure from algebraic geometry, and presented an effective training method called ReLinear. \cite{fan2020universal} established a theorem that demonstrates the superior approximation capability of quadratic networks compared to conventional networks in some functions, while the main theorem in this draft provides a theory for heterogeneous networks. These earlier studies did not touch on the idea of synergizing different neurons into a network and the efficiency of heterogeneous networks.}




\section{Power of Heterogeneous Networks}
Here, we prove a non-trivial theorem that there exists a function that is easy to approximate by a heterogeneous network but hard to approximate by a purely conventional or quadratic network, \textit{i.e.}, the difference in the number of neurons needed for the same level of error is polynomial vs exponential. This result suggests that at least for some tasks, combining conventional and quadratic neurons is instrumental to model's power and efficiency, which makes the employment of heterogeneous networks well-supported.

\textbf{Quadratic neuron \cite{fan2019quadratic}.} A quadratic neuron computes two inner products and one power term of the input vector and aggregates them for a nonlinear activation function. Mathematically, given the {$d$-dimension} input $\boldsymbol{x} = (x_1, x_2, \ldots ,x_d) \in \mathbb{R}^{d\times1}$, the output of a quadratic neuron is expressed as 
\begin{equation}
\begin{aligned}
 &\sigma\Big((\sum_{i=1}^{d} w_{i}^r x_i +b^r)(\sum_{i=1}^{d} w_{i}^g x_i +b^g) + \sum_{i=1}^{d} w_{i}^b x_{i}^2+c \Big) \\
=&\sigma\Big((\boldsymbol{x}^\top\boldsymbol{w}^{r}+b^{r})(\boldsymbol{x}^\top\boldsymbol{w}^{g}+b^{g})+(\boldsymbol{x}\odot\boldsymbol{x})^\top\boldsymbol{w}^{b}+c\Big),
\end{aligned}
\label{qneq1}
\end{equation}
where $\sigma(\cdot)$ is a nonlinear activation function, $\odot$ denotes the Hadamard product, $\boldsymbol{w}^r,\boldsymbol{w}^g, \boldsymbol{w}^b\in\mathbb{R}^{d\times 1}$ are weight vectors, and $b^r, b^g, c\in\mathbb{R}$ are biases. Please note that superscripts $r,g,b$ are just marks for convenience without special meanings.


\begin{table}[h]
\caption{A table of important symbols}
{
\scalebox{0.9}{\begin{tabular}{l|l}
\toprule
 Symbol                           & Description                                              \\ \midrule
$\boldsymbol x$                  & Input                                               \\
$d$                              & The dimension of the input                          \\ 
$\boldsymbol{w}^{r}$, $\boldsymbol{w}^{g}$, $\boldsymbol{w}^{b}$   & Weight vectors in a quadratic neuron \\
$b^r, b^g, c$ & Biases \\
$\sigma$ & The activation function \\
$\boldsymbol{\omega}$ &   Frequency elements \\ 
$\boldsymbol{v}$ & Unit vector\\
$\mathbb{B}_d$ &  Unit Euclidean ball \\
$R_d\mathbb{B}_d$ & Unit-volume Euclidean ball, $R_d$ is a selected constant.\\
$\boldsymbol{1}_{\boldsymbol{\omega}\in R_d\mathbb{B}_d}$ & Indicator function that elements within $R_d\mathbb{B}_d$ are one\\
$\varphi(\boldsymbol{x})$ & Fourier transform of $\boldsymbol{1}_{\boldsymbol{\omega}\in R_d\mathbb{B}_d}$\\
$\mu=\varphi^2(\boldsymbol{x})$ & A general measure\\
$\mathbb{E}_{\x \sim \mu} (p-q)^2$ & The distance between $p$ and $q$ under the measure $\mu$\\
$f_1$                            & A one-hidden-layer conventional network                  \\ 
$f_2$                            & A one-hidden-layer quadratic network                     \\ 
$f_3$                            & A one-hidden-layer heterogeneous network                 \\ 
$\tilde{g}_{ip}(\boldsymbol{x})$ & The constructed inner-product function                                   \\ 
$\tilde{g}_{r}(\boldsymbol{x})$  & The constructed radial function                                          \\ 
$c_\sigma$                       & The constant dependent on $\sigma$              \\ 
$c_1, c_2$                       & Constants associated with the quadratic network           \\ 
$\epsilon_1$                     & Approximation error associated with quadratic networks    \\ 
$c_3, c_4$                       & Constants associated with the conventional network              \\ 
$\epsilon_2$                     & Approximation error associated with conventional networks \\ 
$C_1, C_2$                       & Constants associated with heterogeneous networks \\
$\operatorname{Span}\{\v\}$ & The set of all linear combinations of $\v$ \\
$\operatorname{Span}\{\v\} +R_{d}\mathbb{B}_{d}$ & The region resultant by convoluting $\operatorname{Span}\{\v\}$ and $R_{d}\mathbb{B}_{d}$ \\
$m_i(\x)$ & A continuous radial function\\
$\mathbb{S}^{d-1}$  &The unit Euclidean sphere in $\mathbb{R}^d$\\
$T$   &  $T = \operatorname{Span}\left\{{\boldsymbol{v}}\right\}+R_{d} \mathbb{B}_{d}$ \\
$r$ & The radius \\
$h(r)$ & $\frac{Area\left(r \mathbb{S}^{d-1} \cap T\right)}{Area\left(r \mathbb{S}^{d-1}\right)}$ \\
$c$ & A constant that restricts the upper bound of $h(r)$\\
$l$ & The function $l=\frac{\widehat{\tilde{g}_{ip}\varphi }}{\left\| \tilde{g}_{ip}\varphi \right\| _{L_2}}$\\
$q$ & The function $q=\frac{\widehat{f_2\varphi }}{\left\| f_2\varphi \right\| _{L_2}}$ \\
$\tilde{g}_{ip}^{(t)}(\x)$ & Truncating $\tilde{g}_{ip}(\x)$ 
\\ \bottomrule
\end{tabular}}}
\label{tab:symbols}
\end{table}

\textbf{Preliminaries and assumptions. } {Symbols used in our theorem are shown in Table \ref{tab:symbols}. Given arbitrary functions $p$ and $q$, we have the following assumptions.} i) $\|\cdot\|$ is the $L_2$ norm, \textit{i.e.}, $\|p\|^{2}=\int_{\boldsymbol{x}}{p^2(\boldsymbol{x})d\boldsymbol{x}}$. ii) The inner product of two functions is $\langle p,q\rangle_{L_2}=\int_{\boldsymbol{x}}{p(\boldsymbol{x})q(\boldsymbol{x})d\boldsymbol{x}}$. iii) $\varphi(\boldsymbol{x})=(\frac{R_d}{\|\boldsymbol{x}\|})^{d/2}J_{d/2}(2\pi R_d\|\boldsymbol{x}\|)$, where $J_{d/2}: \mathbb{R} \to \mathbb{R}$ is the Bessel function of the first kind. $\varphi(\boldsymbol{x})$ is the Fourier transform of the unit-volume ball in the frequency domain $\boldsymbol{1}_{\boldsymbol{\omega}\in R_d\mathbb{B}_d}$. iv) For simplicity, we use $\hat{p}(\boldsymbol{\omega})$ to denote the Fourier transform of $p(\x)$. v) {The $L_2$-norm of $p(\x)$ under a general measure $\mu=\varphi^2(\x)$ is $\left[\int_{\boldsymbol{x}}{p^2(\boldsymbol{x}) \varphi^2(\boldsymbol{x}) d\boldsymbol{x}}\right]^{1/2}$.} The distance between $p$ and $q$ under {a general measure $\mu=\varphi^2(\x)$ equals to}

\begin{equation}
    \int (p-q)^2\varphi^2 d\x = \Vert p\varphi-q\varphi \Vert^2 = \Vert \hat{p}*\textbf{1}_{\boldsymbol{\omega}\in R_d\mathbb{B}_d}-\hat{q}*\textbf{1}_{\boldsymbol{\omega}\in R_d\mathbb{B}_d} \Vert^2,
\end{equation}
where $*$ is a convolution. For simplicity, let $\mathbb{E}_{\x \sim \mu} (p-q)^2=\int (p-q)^2\varphi^2 d\x$. vi) A one-hidden-layer conventional network of $k$ neurons is $f_1(\x) = \sum_{i=1}^k a_i\sigma(b_i {\x^\top} \v_i+c_i)$, while a one-hidden-layer quadratic network is simplified as $f_2(\x) = \sum_{i=1}^k a_i\sigma(b_i\x^\top\x +c_i)$, in which keeping only the power term is sufficient to express the construction. vii) All
theorems assume that the activation function $\sigma(\cdot)$ satisfies
$|\sigma(x)| \leq C(1 + |x|^\alpha), x \in \mathbb{R}$ for some constants {$C$} and $\alpha$, {where $C$ and $\alpha$ are dependent on $\sigma(\cdot)$}. viii) {$\operatorname{Span}\{\v\}$ is the set of all linear combinations of $\v$.} ix) {$\operatorname{Span}\{\v\} +R_{d}\mathbb{B}_{d}$ is the region resultant from convolution between two regions: $\operatorname{Span}\{\v\}$ and $R_{d}\mathbb{B}_{d}$. Geometrically, $\operatorname{Span}\{\v\} +R_{d}\mathbb{B}_{d}$ is a hyper-cylinder. } As Figure \ref{fig:frequency} shows, Fourier transform of
a radial function is also radial in the frequency domain, while Fourier transform of an inner-product based function {$g(\x^\top\v)$} is supported over a line, denoted as {$\operatorname{Span}\{\v\}$}. Then, $\hat{g}*\textbf{1}_{\boldsymbol{\omega}\in R_d\mathbb{B}_d}$ will be supported over a region resultant from convolution between two regions: $\operatorname{Span}\{\v\}$ and $R_{d}\mathbb{B}_{d}$. Geometrically, $\operatorname{Span}\{\v\} +R_{d}\mathbb{B}_{d}$ is a hyper-cylinder.

\begin{figure}[htb]
\center{\includegraphics{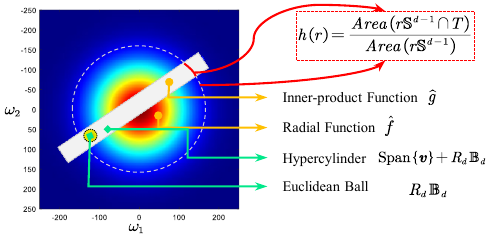}}
\caption{Fourier transforms of a radial function $f$ and an inner-product based function $g$ in the frequency domain.} 
\label{fig:frequency}
\end{figure}


\begin{theorem}[Main]
There exist universal constants {$c_\sigma$,} $c_1,c_2,c_3,c_4, C_1, C_2, \epsilon_1, \epsilon_2>0$ {such that the following holds:} For every dimension $d$, there exists a {measure $\mu$} and a function $ \tilde{g}_{ip}(\boldsymbol{x})+ \tilde{g}_{r}(\boldsymbol{x}): \mathbb{R}^d \to \mathbb{R}$ with the following properties:
\begin{enumerate}[label=\theenumi.]

    \item Every $f_1$ expressed by a one-hidden-layer conventional network of width at most $\frac{1}{2}c_1e^{c_2d}$ has 
    \begin{equation}
        \mathbb{E}_{\x \sim \mu} [f_1-(\tilde{g}_{ip}+\tilde{g}_{r})]^2 \geq \epsilon_2.
    \end{equation}

    \item Every $f_2$ expressed by a one-hidden-layer quadratic network of width at most $\frac{1}{2}c_3e^{c_4d}$ has
    \begin{equation}
        \mathbb{E}_{\x \sim \mu} [f_2-(\tilde{g}_{ip}+\tilde{g}_{r})]^2 \geq \epsilon_1.
    \end{equation}

        \item There exists a function $ f_3$ expressed by a one-hidden-layer heterogeneous network that can approximate $\tilde{g}_{ip}+\tilde{g}_{r}$ with $C_1c_\sigma d^{3.75}+C_2c_\sigma$ neurons.
\end{enumerate}
\label{thm:main}
\end{theorem}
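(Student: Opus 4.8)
The plan is to prove the three parts by combining two complementary single-type separation results and then exhibiting an explicit heterogeneous construction. First I would fix the two ingredients: a radial function $\tilde{g}_{r}$ whose Fourier transform is concentrated on a thin high-frequency spherical shell, taken from the construction of \cite{fan2020universal}, and an inner-product (directional) function $\tilde{g}_{ip}$ supplied by Theorem~\ref{thm:core} via the machinery of \cite{eldan2016power}. The crucial design choice is to place the frequency support of $\tilde{g}_{ip}$ on a thin hypertube $\operatorname{Span}\{v\}+R_d\mathbb{B}_d$ that is essentially disjoint from the shell carrying $\tilde{g}_{r}$; this near-orthogonality in the spectral domain (measured in the norm of preliminary v) is what lets the two hardness phenomena coexist in the single target $\tilde{g}_{ip}+\tilde{g}_{r}$ without interfering.

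For Part 3 the construction is direct. I would build $f_3 = f_3^{\mathrm{ip}} + f_3^{r}$ as a single hidden layer whose neurons split into two groups: conventional neurons realizing $f_3^{\mathrm{ip}}\approx\tilde{g}_{ip}$ and quadratic neurons realizing $f_3^{r}\approx\tilde{g}_{r}$. Because $\tilde{g}_{ip}$ is inner-product based it is cheap for conventional neurons, and because $\tilde{g}_{r}$ is radial it is cheap for quadratic neurons (a single power-term quadratic neuron $\sigma(b\x\x^\top+c)$ is itself a function of $\|\x\|^2$, hence radial). Summing the two polynomial neuron budgets yields the claimed $C_1 c_\sigma d^{3.75}+C_2 c_\sigma$ count, where the $d^{3.75}$ factor is inherited from the approximation rate of the harder of the two pieces and $c_\sigma$ collects the activation-dependent constants from assumption viii. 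One only needs to drive each piece below half of the separation thresholds so that $\mathbb{E}_\mu[f_3-(\tilde{g}_{ip}+\tilde{g}_{r})]^2 < \min\{\epsilon_1,\epsilon_2\}$, making the gap genuine.

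For Part 1 I would argue in the frequency domain. Each conventional neuron $\sigma(b_i\x\v_i^\top+c_i)$ has Fourier support on the line $\operatorname{Span}\{v_i\}$, so after convolution with $\mathbf{1}_{\boldsymbol{\omega}\in R_d\mathbb{B}_d}$ the whole width-$k$ network is supported on a union of $k$ hypertubes. Projecting the error onto the shell that carries $\tilde{g}_{r}$ and away from the single tube that carries $\tilde{g}_{ip}$, a covering estimate shows that $k\le c_1 e^{c_2 d}$ tubes overlap only an exponentially small fraction of the shell; the uncovered mass forces $\mathbb{E}_\mu[f_1-(\tilde{g}_{ip}+\tilde{g}_{r})]^2 \ge \epsilon_2$. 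This is the Eldan--Shamir radial-versus-ridge obstruction, unaffected by the conventional network's ability to match the directional part $\tilde{g}_{ip}$. Part 2 is the mirror image and is the simplest: the power-term quadratic network $f_2(\x)=\sum_i a_i\sigma(b_i\x\x^\top+c_i)$ is a function of $\|\x\|^2$ alone, hence radial; it can absorb $\tilde{g}_{r}$ and the radial component of $\tilde{g}_{ip}$ but never its non-radial energy, so the residual is bounded below by the directional energy of $\tilde{g}_{ip}$, giving $\ge \epsilon_1$ for every width (Theorem~\ref{thm:core}).

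The step I expect to be the main obstacle is the spectral bookkeeping that guarantees the two separations compose. It is not enough that $\tilde{g}_{r}$ alone is hard for ridge networks and $\tilde{g}_{ip}$ alone is hard for radial networks; I must certify that adding $\tilde{g}_{ip}$ cannot help a conventional network shave the shell error below $\epsilon_2$, and that adding $\tilde{g}_{r}$ cannot help a quadratic network reduce the directional error below $\epsilon_1$. Making this rigorous requires a quantitative disjointness of the two frequency supports under $\mu=\varphi^2$, together with a cross-term bound showing the interaction energy is negligible, so that the error genuinely splits into a directional piece controlled by $\tilde{g}_{ip}$ and a radial-shell piece controlled by $\tilde{g}_{r}$. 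Pinning down $\epsilon_1,\epsilon_2$ after this decomposition, consistently with the exponents $c_2,c_4$ and the polynomial rate $d^{3.75}$, is the most delicate part of the argument.
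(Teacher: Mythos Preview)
Your plan for Part~3 matches the paper exactly: split the hidden layer into a conventional block approximating $\tilde{g}_{ip}$ and a quadratic block approximating $\tilde{g}_{r}$, and add the neuron budgets.

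For Parts~1 and~2, your route differs from the paper's, and the difference is precisely at the point you flag as the main obstacle. You propose a spectral projection argument (project onto the shell away from the tube, control cross terms quantitatively) and you correctly worry that making the two separations compose requires a delicate disjointness-of-supports analysis. The paper avoids this analysis entirely by an \emph{absorption trick}. For Part~2, observe that $\tilde{g}_{r}(\|\x\|)=\big(\tilde{g}_{r}\circ\sqrt{\,\cdot\,}\big)(\|\x\|^{2})$, so $\tilde{g}_{r}$ is \emph{exactly} a single quadratic neuron with the special activation $\sigma'=\tilde{g}_{r}\circ\sqrt{\,\cdot\,}$, and this $\sigma'$ still obeys the growth assumption~viii. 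Hence $f_{2}-\tilde{g}_{r}$ is itself a one-hidden-layer quadratic network of width $k+1$, and the lower bound of Theorem~\ref{thm:core} applies verbatim to it, yielding $\mathbb{E}_{\mu}[(f_{2}-\tilde{g}_{r})-\tilde{g}_{ip}]^{2}\ge\epsilon_{1}$ with no cross-term bookkeeping. Part~1 is handled symmetrically: $\tilde{g}_{ip}$ is exactly one conventional neuron with a bounded activation, so $f_{1}-\tilde{g}_{ip}$ is a conventional network and Theorem~\ref{thm:quadratic} gives the bound directly.

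Your approach is not wrong and could be pushed through, but it is strictly harder: you would need the quantitative disjointness and interaction-energy estimates you describe, whereas the paper reduces both lower bounds to black-box invocations of Theorems~\ref{thm:quadratic} and~\ref{thm:core} at the cost of one extra neuron each. The trade-off is that your argument, if completed, would be more robust (it would not rely on the freedom to change the activation of a single neuron), while the paper's argument is shorter and makes the ``main obstacle'' disappear.
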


\textbf{The sketch of proof} (Table \ref{results:sketch}). It has been shown that there exists a function $\tilde{g}_{r}$ hard to approximate by a one-hidden-layer conventional network and easy to approximate by a one-hidden-layer quadratic network (Theorem \ref{thm:quadratic}). We only need to construct a function $\tilde{g}_{ip}$ hard to approximate by a one-hidden-layer quadratic network and easy to approximate by a one-hidden-layer conventional network (Theorem \ref{thm:core}). Then, $\tilde{g}_{ip}+\tilde{g}_{r}$ will be hard for both conventional and quadratic networks but easy for a heterogeneous network (Theorem \ref{thm:main}).

{\textbf{Remark 1}. The standard quadratic neuron is 
$
\sigma((\boldsymbol{x}^\top\boldsymbol{w}^{r}+b^{r})(\boldsymbol{x}^\top\boldsymbol{w}^{g}+b^{g})+(\boldsymbol{x}\odot\boldsymbol{x})^\top\boldsymbol{w}^{b}+c)
$. By excluding interaction terms, we do not intend to weaken the expressive ability of quadratic neurons to establish the superiority of heterogeneous networks. Instead, this is because the target function $\tilde{g}_{ip}$ is a function made of an inner product. Our theorem is existential. Even if we include interaction terms, to represent $\tilde{g}_{ip}$, we can find a network whose every quadratic neuron actually takes $\boldsymbol{w}^{g}=0, b^g=1, \boldsymbol{w}^{b}=0, c=0$. Such a network is essentially a conventional network, with quadratic neurons degenerating into conventional ones. In this regard, it is true that a standard quadratic network can express $\tilde{g}_{ip}+\tilde{g}_r$. But because some quadratic neurons degenerate into conventional neurons, this quadratic network is essentially a heterogeneous network.
If the constructed function $\tilde{g}_{ip}$ is not based on inner-product, the original form of quadratic neurons must be adopted. }


\begin{table}[htbp]
  \centering
  \caption{The approximability of $\tilde{g}_{ip}$, $\tilde{g}_{r}$, and $\tilde{g}_{ip}+\tilde{g}_{r}$.}
  \scalebox{0.9}{
  \begin{tabular}{ccccc}
  \toprule
      & construction & \makecell{conventional \\ $f_1$} & \makecell{quadratic \\ $f_2$} & \makecell{heterogeneous \\ $f_1+f_2$} \\
     \midrule
    Theorem \ref{thm:quadratic} &  $\tilde{g}_{r}$ (radial) &  \xmark & \cmark & -\\
    Theorem \ref{thm:core} & $\tilde{g}_{ip}$ (inner-product) & \cmark & \xmark & -\\
    {Theorem \ref{thm:main}} & {$\tilde{g}_{ip}+\tilde{g}_{r}$} &  {\xmark} & {\xmark} & {\cmark} \\
    \bottomrule
    \end{tabular}}%
    \label{results:sketch}
    \vspace{-0.3cm}
\end{table}%

\begin{theorem}
There exist universal constants {$c_\sigma$,} $c_1, c_2, C_1, {\epsilon_1}>0$ {such that the following holds:} For every dimension $d$, there exists {a measure $\mu$} and an inner-product based function $ \tilde{g}_{ip}(\boldsymbol{x}): \mathbb{R}^d \to \mathbb{R}$ with the following properties:
\begin{enumerate}[label=\theenumi.]

    \item Every $f_2$ that is expressed by a one-hidden-layer quadratic network of width at most $\frac{1}{2}c_1e^{c_2d}$ has
    \begin{equation}
        \mathbb{E}_{\x \sim \mu} [f_2-\tilde{g}_{ip}]^2 \geq {\epsilon_1}.
    \end{equation}
   
    \item There exists a function $f_1$ that is expressed by a one-hidden-layer conventional network that can approximate $\tilde{g}_{ip}$ with $C_1c_\sigma$ neurons.
\end{enumerate}
\label{thm:core}
\end{theorem}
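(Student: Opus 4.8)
The plan is to exploit the structural fact recorded in preliminary~(vi): the simplified quadratic network $f_2(\x)=\sum_{i=1}^{k}a_i\sigma(b_i\x\x^\top+c_i)$ sees the input only through $\|\x\|^2$, so \emph{every} such $f_2$ is a radial function, irrespective of the width $k$. Hence, by preliminary~(vii), its smeared Fourier transform $\widehat{f_2}*\mathbf{1}$ is again radial, i.e.\ constant on each sphere $\{\|\boldsymbol{\omega}\|=r\}$. This is the mirror image of Theorem~\ref{thm:quadratic}: there a sum of $k$ ridge functions produced a Fourier support equal to a union of $k$ hypertubes, which could not fill out a radial target; here a sum of radial functions produces a purely radial Fourier profile that cannot reproduce a target whose energy is concentrated along a single direction.

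Guided by this duality, I would take $\tilde{g}_{ip}(\x)=g(\x\v^\top)$ for a fixed unit vector $\v$ and a bounded one-dimensional profile $g$, so that (preliminary~vii) its smeared transform $\widehat{\tilde{g}_{ip}}*\mathbf{1}$ is supported on the hypertube $\operatorname{Span}\{\v\}+R_d\mathbb{B}_d$. To prove part~1, I would split the frequency-domain $L_2$ space into the closed subspace of radial functions and its orthogonal complement, with projector $P_{\mathrm{rad}}^{\perp}$ onto the latter. Since $\widehat{f_2}*\mathbf{1}$ lies in the radial subspace, the identity in preliminary~(v) yields
\begin{equation}
\mathbb{E}_{\x\sim\mu}[f_2-\tilde{g}_{ip}]^2=\big\|\widehat{f_2}*\mathbf{1}-\widehat{\tilde{g}_{ip}}*\mathbf{1}\big\|^2\ \geq\ \big\|P_{\mathrm{rad}}^{\perp}\big(\widehat{\tilde{g}_{ip}}*\mathbf{1}\big)\big\|^2,
\end{equation}
a bound that holds for \emph{any} width, so the hypothesis $k\leq c_1e^{c_2d}$ is in fact not needed here. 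It then remains to show that a thin hypertube has non-radial energy at least a universal $\epsilon$: its spherical average is supported on a vanishingly small cap of each sphere, so the radial part $P_{\mathrm{rad}}(\widehat{\tilde{g}_{ip}}*\mathbf{1})$ captures only a tiny fraction of the total mass $\|\widehat{\tilde{g}_{ip}}*\mathbf{1}\|^2=\mathbb{E}_\mu[\tilde{g}_{ip}^2]$, leaving the orthogonal residual bounded below.

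For part~2 the ridge structure makes approximation by a conventional network immediate. A conventional one-hidden-layer network $f_1(\x)=\sum_{i=1}^{k}a_i\sigma(b_i\x\v^\top+c_i)$ restricted to the line through $\v$ is exactly the one-dimensional network $t\mapsto\sum_i a_i\sigma(b_it+c_i)$ fitting the scalar profile $g(t)$. Because $\mu$ places its mass on a bounded region, one only needs to approximate $g$ on a compact interval; the growth condition~(viii) on $\sigma$ together with a standard one-dimensional universal-approximation estimate then supplies a fixed number $C_1c_\sigma$ of neurons, independent of $d$, achieving the required accuracy.

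The main obstacle will be the quantitative non-radial lower bound in part~1. With $\mu$ fixed as $\varphi^2$, one must choose the profile $g$ so that the spherical average of the hypertube is provably negligible and the residual energy exceeds a universal $\epsilon$ uniformly in $d$, while keeping $g$ simple enough that part~2 still costs only $C_1c_\sigma$ neurons. Controlling the spherical average amounts to transporting the Bessel-function estimates of \cite{eldan2016power,fan2020universal} to this dual setting, where the roles of \emph{radial} and \emph{ridge} are interchanged relative to Theorem~\ref{thm:quadratic}; this reconciliation of strong anisotropy against a cheap scalar profile is the crux of the argument.
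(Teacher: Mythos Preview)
Your proposal is correct and shares the paper's core geometric idea --- the simplified quadratic network is radial, the target $\tilde{g}_{ip}$ is a ridge function whose smeared transform lives on a thin hypertube, and the mismatch between radial and tube-supported functions in high dimension forces the gap. The paper implements this via an explicit inner-product bound (its Lemma~\ref{lemma:gap}): for $f$ a sum of $k$ radial functions and $g$ tube-supported with a $\delta$-fraction of its mass outside $2R_d\mathbb{B}_d$, one has $\langle f,g\rangle_{L_2}\leq 1-\delta/2+k\exp(-cd/2)$, and it is this $k$-term that produces the width restriction $k\leq c_1e^{c_2d}$ in the final statement. Your orthogonal-projection argument is cleaner and in fact stronger: since any sum of radial functions is again radial, projecting $\widehat{\tilde{g}_{ip}}*\mathbf{1}$ off the closed radial subspace gives a lower bound independent of width; you are right that for the simplified $f_2$ the width hypothesis is superfluous, and the paper's $k$-factor is an artifact of applying Cauchy--Schwarz to each summand separately rather than to the radial total. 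The quantitative non-radial-energy estimate you flag as the main obstacle is precisely what the paper's Lemma~\ref{lemma:gap} (read with $k=1$) supplies, by splitting into $2R_d\mathbb{B}_d$ and its complement and invoking the spherical-cap area ratio $h(r)\leq\exp(-cd)$; the concrete choice $\tilde{g}_{ip}(\x)=\frac{1}{2\pi}\sqrt{\tfrac{4R_d}{1-\delta}}\,\mathrm{sinc}\big(\tfrac{2R_d}{1-\delta}\x\v^\top\big)$ makes the $\delta$-mass condition explicit (Lemma~\ref{lem:g_2construction}). For part~2 your sketch matches the paper's: truncate the scalar profile and apply a one-dimensional approximation bound to get $C_1c_\sigma$ neurons independent of $d$.
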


{Based on our proof, the relation between $\epsilon_1$ and $c_1,c_2$ is characterized by $\epsilon_1 \leq \|\tilde{g}_{ip}\|_{L_{2}(\mu)}  \sqrt{2 (c_{1} / 2-k \exp \left(-c_{2} d\right)}$, while $0<\epsilon_1 \leq \|\tilde{g}_{r}\|_{L_{2}(\mu)}  \sqrt{2(c_{3} / 2-k \exp \left(-c_{4} d\right)}$ according to Theorem 1 of \cite{fan2020universal}.}

\begin{theorem}[Theorem 1 of \cite{fan2020universal}]
There exist universal constants $c_\sigma, c_3, c_4, C_2, \epsilon_2>0$ {such that the following holds:} For every dimension $d$, there exists {a measure $\mu$} and a radial function $ \tilde{g}_{r}(\boldsymbol{x}): \mathbb{R}^d \to \mathbb{R}$ with the following properties:
\begin{enumerate}[label=\theenumi.]

    \item Every $f_1$ that is expressed by a one-hidden-layer conventional network of width at most {$\frac{1}{2}c_3e^{c_4d}$} has
    \begin{equation}
        \mathbb{E}_{\x \sim \mu} [f_1-\tilde{g}_{r}]^2 \geq {\epsilon_2}.
    \end{equation}
    
        \item There exists a function $f_2$ expressed by a one-hidden-layer quadratic network that can approximate $\tilde{g}_{r}$ with {$C_2c_\sigma d^{3.75}$} neurons.
\end{enumerate}
\label{thm:quadratic}
\end{theorem}

\begin{proof}
The detailed proof can be found in Theorem 1 of \cite{fan2020universal} and Proposition 13 of \cite{eldan2016power}. 
\end{proof}

\textbf{The key idea of constructing $\tilde{g}_{ip}$.} The purpose of construction is to enlarge the difference between $\tilde{g}_{ip}$ and $f_2$. First, since $\hat{f_2}(\boldsymbol{\omega})$ is radial, $\hat{\tilde{g}}_{ip}(\boldsymbol{\omega})$ is preferred to be supported over as small area as possible such that $\hat{f_2}$ and $\hat{\tilde{g}}_{ip}$ have a small overlap. Therefore, $\tilde{g}_{ip}$ is cast as an inner-product based function whose Fourier transform is only supported over a line. Second, still because $\hat{f_2}(\boldsymbol{\omega})$ is radial, we wish $\tilde{g}_{ip}$ to have unneglectable high-frequency elements because the portion of overlap between $f_2$ and $\tilde{g}_{ip}$ is lower in high-frequency domain than in low-frequency domain.
Specifically, given a unit vector $\boldsymbol{v}$, let $\tilde{g}_{ip}(\x)=\frac{1}{2\pi}\sqrt{\frac{4R_d}{1-\delta}}\mathrm{sinc}\Big(\frac{2R_d}{1-\delta} {\x^\top\boldsymbol{v}}\Big), \delta \in [0,1]$. In the following, Lemma \ref{lem:g_2construction} implies that $\tilde{g}_{ip}$ has certain high-frequency elements, and Lemma \ref{lemma:gap} suggests that an inner-product based function and a radial function is incompatible, since their inner-product is upper bounded exponentially.

\begin{lemma}
Given a unit vector $\v$, $\tilde{g}_{ip}(\x)=\frac{1}{2\pi}\sqrt{\frac{4R_d}{1-\delta}}\mathrm{sinc}\Big(\frac{2R_d}{1-\delta} {\x^\top\boldsymbol{v}}\Big)$ satisfies 
$\int_{2R_d\mathbb{B}_{d}} \hat{\tilde{g}}_{ip}^2(\boldsymbol{\omega}) d \boldsymbol{\omega} = 1-\delta, \delta \in [0,1]$.
\label{lem:g_2construction}
\end{lemma}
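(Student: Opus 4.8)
The plan is to collapse the $d$-dimensional frequency integral to a one-dimensional one and then read off the answer from the sinc--rectangle Fourier pair. The key observation is that $\tilde{g}_{ip}(\x)$ depends on $\x$ only through the scalar $t = \x\v^\top$, i.e.\ it is a ridge (inner-product) function, so by preliminary (vii) its Fourier transform is concentrated on the line $\operatorname{Span}\{\v\}$. Writing a generic frequency as $\boldsymbol{\omega} = s\v + \boldsymbol{\omega}_\perp$ with $\boldsymbol{\omega}_\perp \perp \v$, the transform factorizes into the one-dimensional transform of $t \mapsto \mathrm{sinc}\big(\tfrac{2R_d}{1-\delta}t\big)$ in the variable $s$, times a Dirac mass at $\boldsymbol{\omega}_\perp = 0$ arising from integrating over the $d-1$ directions orthogonal to $\v$. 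Because $\v$ is a unit vector, the region $2R_d\mathbb{B}_d$ meets this support line exactly in the diameter $s \in [-2R_d, 2R_d]$, so $\int_{2R_d\mathbb{B}_d} \hat{\tilde{g}}_{ip}^2\, d\boldsymbol{\omega}$ reduces to the genuine one-dimensional integral $\int_{-2R_d}^{2R_d} \hat{h}(s)^2\, ds$, where $\hat{h}$ is the one-dimensional Fourier transform of $h(t) = \tfrac{1}{2\pi}\sqrt{\tfrac{4R_d}{1-\delta}}\,\mathrm{sinc}\big(\tfrac{2R_d}{1-\delta}t\big)$.

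Next I would invoke the standard sinc--rectangle pair: the Fourier transform of $\mathrm{sinc}(\beta t)$ is a constant multiple of the indicator $\mathbf{1}_{[-\beta,\beta]}$ of a symmetric interval whose half-width equals the dilation $\beta = \tfrac{2R_d}{1-\delta}$. Carrying the prefactor through, $\hat{h}$ equals a constant times $\mathbf{1}_{[-\beta,\beta]}$, and the whole purpose of the normalization $\tfrac{1}{2\pi}\sqrt{4R_d/(1-\delta)}$ is to calibrate that constant so that its square equals $\tfrac{1-\delta}{4R_d}$. Since $\delta \in [0,1]$ forces $\tfrac{1}{1-\delta} \geq 1$ and hence $\beta \geq 2R_d$, the interval of integration $[-2R_d,2R_d]$ sits entirely inside the support $[-\beta,\beta]$ where $\hat{h}^2$ is constant; the integral is therefore just $\tfrac{1-\delta}{4R_d}$ times the length $4R_d$, which collapses to $1-\delta$. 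As a sanity check, running the same computation over all of $\R$ gives total energy $1$, so $\delta$ is precisely the fraction of frequency energy lying outside $2R_d\mathbb{B}_d$ — exactly the ``high-frequency content'' the construction is designed to control.

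The two Fourier evaluations and the final arithmetic are routine; the real care is bookkeeping. The main obstacle is making $\int_{2R_d\mathbb{B}_d}\hat{\tilde{g}}_{ip}^2$ rigorous even though $\hat{\tilde{g}}_{ip}$ is a distribution supported on a lower-dimensional line: one must justify reading it as the one-dimensional integral along $\operatorname{Span}\{\v\}$ (equivalently, applying Parseval to the line restriction rather than naively squaring a Dirac factor). The second pitfall is convention dependence — one must fix a single Fourier-transform convention and sinc normalization, consistent with the preliminaries, so that the calibrating constant lands exactly on $1-\delta$ rather than on some convention-dependent multiple of it.
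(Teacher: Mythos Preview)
Your proposal is correct and follows essentially the same route as the paper: both identify $\hat{\tilde g}_{ip}$ as supported on $\operatorname{Span}\{\v\}$, invoke the sinc--rectangle pair to obtain the constant value $\sqrt{(1-\delta)/(4R_d)}$ on $[-\tfrac{2R_d}{1-\delta},\tfrac{2R_d}{1-\delta}]$, and then integrate along the diameter $[-2R_d,2R_d]$ to get $1-\delta$. Your explicit check that $\beta \ge 2R_d$ and your flagging of the line-support/distributional subtlety are more careful than the paper's terse presentation, but the underlying argument is identical.
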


\begin{proof}
Denote the support of $\hat{\tilde{g}}_{ip}(\boldsymbol{\omega})$ as $\operatorname{Span}\{\v\}$, which is $\{\boldsymbol{\omega}=t\v ~|~t \in \mathbb{R}\}$. $\tilde{g}_{ip}(\x)$ is mathematically formulated as
\begin{equation}
   \hat{\tilde{g}}_{ip}(\boldsymbol{\omega})=
   \begin{cases}
   & \sqrt{\frac{1-\delta}{4R_d}}, \ \ \ \boldsymbol{\omega}=t\v, t \in [-\frac{2R_d}{1-\delta},\frac{2R_d}{1-\delta}] \\
   & \ \ \ \ \ 0,  \ \ \ \ \ \boldsymbol{\omega}=t\v, t \notin [-\frac{2R_d}{1-\delta},\frac{2R_d}{1-\delta}].
   \end{cases}
\end{equation}
Because $\hat{\tilde{g}}_{ip}$ is a constant over $[-\frac{2R_d}{1-\delta},\frac{2R_d}{1-\delta}]$, we have $\int_{2R_d\mathbb{B}_{d}} \hat{\tilde{g}}_{ip}^2(\boldsymbol{\omega}) d \boldsymbol{\omega} = \int_{2R_d\mathbb{B}_{d}} \frac{1-\delta}{4R_d} d \boldsymbol{\omega} = \frac{1-\delta}{4R_d} \int_{[-2R_d, 2R_d]} dt = 1-\delta$, which concludes the proof.
\end{proof}

\begin{lemma}
Let $g, f: \mathbb{R}^d \to \mathbb{R}$ be two functions of unit norm. Moreover, $\int_{2R_d\mathbb{B}_d}{g^2(\boldsymbol{x})d\boldsymbol{x}} \leq 1 - \delta$, $\delta \in [0,1]$, $g$ is supported over 
$\operatorname{Span}\{\v\} +R_{d}\mathbb{B}_{d}$, and $f$ is a combination of $k$ radial functions, denoted as $f=\sum_{i=1}^k m_i( \Vert\x\Vert )$, {where $m_i(\Vert\x\Vert )$ is a continuous radial function.} Then,
$$
\left \langle f,g \right \rangle_{L_{2}} \leq 1-\frac{\delta }{2} + k\exp(-cd/2) ,
$$
{where $c>0$ is a constant dependent on $f$.}
\label{lemma:gap}
\end{lemma}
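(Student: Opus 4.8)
The plan is to split $g$ according to the ball $2R_d\mathbb{B}_d$ and to control the two resulting pieces by entirely different means. I would write $g = g_1 + g_2$ with $g_1 = g\cdot\mathbf{1}_{2R_d\mathbb{B}_d}$ and $g_2 = g\cdot\mathbf{1}_{(2R_d\mathbb{B}_d)^c}$, so that $\langle f,g\rangle_{L_2} = \langle f,g_1\rangle_{L_2} + \langle f,g_2\rangle_{L_2}$. The inner piece is controlled by the hypothesis that $g$ carries little energy inside the ball, while the outer piece is controlled by the geometric incompatibility between radial functions and the thin tube supporting $g$.

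For the inner piece I would simply apply Cauchy--Schwarz. Since $\|f\|=1$ and $\|g_1\|^2 = \int_{2R_d\mathbb{B}_d} g^2 \le 1-\delta$, we get $\langle f,g_1\rangle_{L_2} \le \|f\|\,\|g_1\| \le \sqrt{1-\delta}$. Concluding this part only needs the elementary inequality $\sqrt{1-\delta}\le 1-\tfrac{\delta}{2}$, valid for $\delta\in[0,1]$, which gives $\langle f,g_1\rangle_{L_2}\le 1-\tfrac{\delta}{2}$.

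The heart of the argument, and the main obstacle, is the outer piece $\langle f,g_2\rangle_{L_2}=\sum_{i=1}^k\langle m_i,g_2\rangle_{L_2}$, for which I would invoke concentration of measure on spheres. Because $g_2$ is supported on the part of the tube $\operatorname{Span}\{v\}+R_d\mathbb{B}_d$ lying outside $2R_d\mathbb{B}_d$, on any sphere of radius $r>2R_d$ its support meets only the two polar caps where $|\langle \x/\|\x\|,v\rangle|\ge\sqrt{1-(R_d/r)^2}\ge \sqrt{3}/2$. The standard spherical-cap bound shows that the normalized surface measure of such a cap is at most $\exp(-cd/2)$ for a universal constant $c$, coming from the value $t^2=3/4$ in $\Pr(|\langle X,v\rangle|\ge t)\le 2\exp(-dt^2/2)$. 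Integrating this fraction over $r$ in polar coordinates shows that every radial summand obeys $\|m_i\|_{L_2((2R_d\mathbb{B}_d)^c)}\le \exp(-cd/2)\,\|m_i\|$; Cauchy--Schwarz on the outer region then yields $\langle m_i,g_2\rangle_{L_2}\le \exp(-cd/2)\,\|m_i\|\,\|g_2\|\le\exp(-cd/2)$, using $\|g_2\|\le\|g\|=1$ and the normalization $\|m_i\|\le1$ inherited from the single-neuron radial summands of the quadratic network. Summing over the $k$ components gives $\langle f,g_2\rangle_{L_2}\le k\exp(-cd/2)$.

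Combining the two bounds produces $\langle f,g\rangle_{L_2}\le 1-\tfrac{\delta}{2}+k\exp(-cd/2)$, as claimed. I expect the delicate point to be the cap estimate together with the polar-coordinate integration: one must check that the cap fraction bound holds uniformly for all radii $r>2R_d$ so it can be pulled out of the radial integral, and must absorb the harmless multiplicative factors (the $\sqrt2$ from taking square roots and the factor $2$ in the cap bound) into the single universal constant $c$.
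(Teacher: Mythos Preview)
Your approach is essentially identical to the paper's: split $\langle f,g\rangle$ at the ball $2R_d\mathbb{B}_d$, bound the inner piece by Cauchy--Schwarz and $\sqrt{1-\delta}\le 1-\tfrac{\delta}{2}$, and control the outer piece via the spherical-cap fraction $h(r)=\mathrm{Area}(r\mathbb{S}^{d-1}\cap T)/\mathrm{Area}(r\mathbb{S}^{d-1})$, using its monotonicity to pull out $h(2R_d)\le\exp(-cd)$ before taking the square root. One slip to fix: where you write $\|m_i\|_{L_2((2R_d\mathbb{B}_d)^c)}\le\exp(-cd/2)\|m_i\|$ you must mean the norm over $(2R_d\mathbb{B}_d)^c\cap T$, since the cap argument only controls the tube portion of each sphere and a radial $m_i$ can carry all its mass outside the ball; with that correction the argument goes through, and note that both you and the paper tacitly use $\|m_i\|\le 1$ (the paper's step (4) invokes ``$f$ has unit norm'' but actually needs the individual $m_i$ to be normalized), which is not part of the lemma's stated hypotheses.
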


\begin{proof}
Let $T = \operatorname{Span}\left\{{\boldsymbol{v}}\right\}+R_{d} \mathbb{B}_{d}$.
For any radius $r>0$, define
$h(r)=\frac{Area\left(r \mathbb{S}^{d-1} \cap T\right)}{Area\left(r \mathbb{S}^{d-1}\right)}$,
where $\mathbb{S}^{d-1}$ is the unit Euclidean sphere in $\mathbb{R}^d$, and $Area(\cdot)$ is to compute the area. The geometric meaning of $h(r)$ is the ratio of the area of intersection of $T$ and $r\mathbb{S}^{d-1}$ vs the area of $r\mathbb{S}^{d-1}$. Because $T$ is a hyper-cylinder, following the illustration in page 7 of \cite{eldan2016power}, $h(r)$ is exponentially small, \textit{i.e.,} $\exists \ c>0$, such that
\begin{equation}
    h(2R_d) \leq \exp(-cd).
    \label{eq:proof_ratio}
\end{equation}
As $r$ increases, the area of the intersection decreases but the area of the sphere increases. Therefore, $h(r)$ is monotonically decreasing.  

Our goal is to show $\left \langle f,g \right \rangle_{L_{2}}$ can be upper bounded. We decompose the inner product $\left \langle f,g \right \rangle_{L_{2}}$ into the integrals in a hyperball $2R_d\mathbb{B}_d$ and {the region out of the hyperball $(2R_d\mathbb{B}_d)^C$}. Mathematically,  
\begin{equation}
\left \langle f,g \right \rangle_{L_{2}} = \int_{2R_d\mathbb{B}_d }f(\boldsymbol{x})g(\boldsymbol{x})d\boldsymbol{x} + \int_{(2R_d\mathbb{B}_d)^C }f(\boldsymbol{x})g(\boldsymbol{x})d\boldsymbol{x} .
\label{eqn:decomposition}
\end{equation}

Now, we calculate the above two integrals, respectively. For the first integral, we bound it as follows:
\begin{equation}
    \begin{aligned}
     \int_{2R_d\mathbb{B}_d}{f\left( \boldsymbol{x} \right) g\left( \boldsymbol{x} \right) d\boldsymbol{x}} 
    & \overset{\left( 1 \right)}{\leq}  \left\| g\cdot \boldsymbol{1}_{\{ {2R_d\mathbb{B}_d}\}}  \right\| _{{L_2}}\left\| f \right\| _{{L_2}} \\
    &\overset{\left( 2 \right)}\leq \sqrt{1-\delta},
    \end{aligned}
    \label{eqn:part1}
\end{equation}
(1) follows from the Cauchy-Schwartz inequality; (2) follows from the fact that $f(\cdot)$ has unit $L_2$ norm and {$\int_{2R_d\mathbb{B}_d}{g^2(\boldsymbol{x})d\boldsymbol{x}} \leq 1 - \delta$}. 

For the second integral, we have 
$\int_{(2R_d\mathbb{B}_d)^C }{f\left( \boldsymbol{x} \right) g\left( \boldsymbol{x} \right) d\boldsymbol{x}} = \int_{(2R_d\mathbb{B}_d)^C \bigcap{T} }{f\left( \boldsymbol{x} \right) g\left( \boldsymbol{x} \right) d\boldsymbol{x}}$ because $g(\x)=0$ when $\x \notin T$. Then, we have
\begin{equation}
\small
\begin{aligned}
&\int_{(2R_d\mathbb{B}_d)^C \bigcap{T} }{f\left( \boldsymbol{x} \right) g\left( \boldsymbol{x} \right) d\boldsymbol{x}}  \\
=& \sum_{i=1}^k \int_{(2R_d\mathbb{B}_d)^C \bigcap{T} }{m_i\left( \Vert \boldsymbol{x} \Vert \right) g\left( \boldsymbol{x} \right) d\boldsymbol{x}} 
\\
\leq & \sum_{i=1}^k \Bigg( \sqrt{\int_{(2R_d\mathbb{B}_d)^C \bigcap{T}}{m_i^2\left( \Vert \boldsymbol{x} \Vert \right) d\boldsymbol{x}}}\sqrt{\int_{(2R_d\mathbb{B}_d)^C \bigcap{T}}{g^2\left( \boldsymbol{x} \right) d\boldsymbol{x}}} \Bigg)
\\
\overset{\left( 1 \right)}{\leq} & \sum_{i=1}^k \sqrt{\int_{r\geq 2R_d}{\int_{\left( r\mathbb{S}^{d-1} \right) \bigcap{T}}{m_i^2\left( r \right) d\mathbb{S}dr}}}
\\
= & \sum_{i=1}^k\sqrt{\int_{r\geq 2R_d}{\left( \int_{r\mathbb{S}^{d-1}}{m_i^2\left( r \right) d\mathbb{S}} \right) \cdot \Big[\frac{\int_{\left( r\mathbb{S}^{d-1} \right) \cap T}{m_i^2\left( r \right) d\mathbb{S}}}{\int_{r\mathbb{S}^{d-1}}{m_i^2\left( r \right)  d\mathbb{S}}} \Big]dr}}
\\
\overset{\left( 2 \right)}{=}  & \sum_{i=1}^k \sqrt{\int_{r\geq 2R_d}{\int_{r\mathbb{S}^{d-1}}{m_i^2\left( r \right) d\mathbb{S} }\cdot h\left( r \right) dr}}
\\
\overset{\left( 3 \right)}{\leq} & \sum_{i=1}^k  \sqrt{h\left( 2R_d \right) \int_{r\geq 2R_d}{\int_{r\mathbb{S}^{d-1}}{m_i^2\left( r \right) d\mathbb{S}}dr}} 
\\
\overset{(4)}{=} & \sum_{i=1}^k \sqrt{ h\left( 2R_d \right)} \overset{(5)} \leq k\exp \left( -cd/2 \right). 
\end{aligned}
\label{eqn:part2}
\end{equation}
In the above, {(1) follows from the fact that $g(\cdot)$ has a unit $L_2$ norm}; (2) holds because $m_i(r)$ is radial; (3) follows from $h(r)$ is monotonically decreasing; (4) follows from $f(\cdot)$ has a unit norm; {(5) follows from Eq. (\ref{eq:proof_ratio})}.

Combining Eqs. \eqref{eqn:decomposition}, \eqref{eqn:part1} and \eqref{eqn:part2}, we have $\left \langle f,g \right \rangle_{L_{2}} \leq \sqrt{1-\delta} + k\exp(-cd/2) \leq 1 - \frac{\delta}{2} + k\exp(-cd/2).$

\end{proof}


\textit{Proof of \textbf{Theorem} \ref{thm:core}. } The proof of Theorem \ref{thm:core} consists of two parts: (i) the inapproximability of a quadratic network to $\tilde{g}_{ip}$; (ii) the approximability of a conventional network to $\tilde{g}_{ip}$.

(i) Define $l=\frac{\widehat{\tilde{g}_{ip}\varphi }}{\left\| \tilde{g}_{ip}\varphi \right\| _{L_2}}$. Because for $\tilde{g}_{ip}$, we have $\int_{2R_d\mathbb{B}_d}{\tilde{g}_{ip}^2(\boldsymbol{x})d\boldsymbol{x}} = 1 - \delta$,
there must exist a universal constant $c_1 \in [0,1]$ such that $\int_{2R_d\mathbb{B}_d}{l^2(\boldsymbol{x})d\boldsymbol{x}} \leq 1-c_{1}$. Define the function $q=\frac{\widehat{f_2\varphi }}{\left\| f_2\varphi \right\| _{L_2}}$, where {$f_2= \sum_i^k a_i\sigma(\omega_i\x^\top \x+b_i)$ } is a radial function expressed by a quadratic network. Thus, according to Lemma \ref{lemma:gap}, the functions $l{(\cdot)}$, $q{(\cdot)}$ satisfy 
\begin{equation}
    \left \langle l{(\cdot)},q{(\cdot)} \right \rangle_{L_{2}} \leq 1-\frac{c_1}{2} + k\exp(-c_{2}d/2) ,
\label{eq:proof2_qwinnerproduct}
\end{equation}
with $c_2>0$ being a universal constant. 

For every scalars $\beta_1, \beta_2 > 0$, we have
\begin{equation}
       \left\| \beta _1l{(\cdot)}-\beta _2q{(\cdot)} \right\| _{L_2}\geqslant \frac{\beta _2}{2}\left\| l{(\cdot)}-q{(\cdot)} \right\| _{L_2}. 
\label{eqn:gap_with_coeff}       
\end{equation}
The reason why it holds true is as follows:

Without loss of generality, we assume that $\beta_2=1$. For two unit vectors $u$,
$v$ in one Hilbert space, it has 
\begin{equation}
\nonumber
\begin{aligned}
& \min _{\beta}\|\beta v-u\|^{2} =\min _{\beta}\left(\beta^{2}\|v\|^{2}-2 \beta\langle v, u\rangle+\|u\|^{2}\right) \\
=& \min _{\beta}\left(\beta^{2}-2 \beta\langle v, u\rangle+1\right) =1-\langle v, u\rangle^{2}=\frac{1}{2}\|v-u\|^{2}.
\end{aligned}
\end{equation}

Next, combining Eqs. \eqref{eq:proof2_qwinnerproduct} and \eqref{eqn:gap_with_coeff}, and utilizing that $q, l$ have unit $L_2$ norm, we have
\begin{equation}
\begin{aligned}
&\|f_2-\tilde{g}_{ip}\|_{L_{2}(\mu)}=\|f_2 \varphi-\tilde{g}_{ip} \varphi\|=\|\widehat{f_2 \varphi}-\widehat{\tilde{g}_{ip} \varphi}\| \\
=& \left\|\left(\|f_2 \varphi\|\right) q(\cdot)-\left(\|\tilde{g}_{ip} \varphi\|_{L_{2}}\right) l(\cdot)\right\|\\
\geq & \frac{1}{2}\|\tilde{g}_{ip} \varphi\|\|q{(\cdot)}-l{(\cdot)}\|=\frac{1}{2}\|\tilde{g}_{ip}\|_{L_{2}(\mu)}\|q{(\cdot)}-l{(\cdot)}\|_{L_{2}}\\
\geq & \frac{1}{2} {\|\tilde{g}_{ip}\|_{L_{2}(\mu)}}  \sqrt{2\left(1-\langle q, l\rangle_{L_{2}}\right)} \\
\geq &  {\|\tilde{g}_{ip}\|_{L_{2}(\mu)}}  \sqrt{2 \max \left(c_{1} / 2-k \exp \left(-c_{2} d\right), 0\right)},
\end{aligned}
\end{equation}
where the first inequality is due to Eq. \eqref{eqn:gap_with_coeff}. Therefore, as long as $k\leq \frac{c_1}{2}e^{c_2d}$, $\|f_2-\tilde{g}_{ip}\|_{L_{2}(\mu)}$ is larger than a positive constant ${\epsilon_1}$.  

(ii) In contrast, $f_1$ can approximate $\tilde{g}_{ip}(\x)=\frac{1}{2\pi}\sqrt{\frac{4R_d}{1-\delta}}\mathrm{sinc}\Big(\frac{2R_d}{1-\delta} {\x^\top\boldsymbol{v}}\Big)$ with a polynomial number of neurons. From the proof of Theorem 1 in \cite{debao1993degree}, $\forall {H}: \mathbb{R}\to \mathbb{R}$ which is constant outside a bounded interval $[r_1,r_2]$, there exist scalars $a, \{\alpha_i, \beta_i, \gamma_i\}_{i=1}^k$, where $k \leq c_\sigma \frac{(r_2-r_1)L}{\epsilon}$ such that the function 
$h(t)=a+\sum_{i=1}^k \alpha_i \cdot \sigma(\beta_i t-\gamma_i)$
satisfies that $\underset{t \in \mathbb{R}}{\sup} \ \  |{H}(t)-h(t)|\leq \epsilon$.

According to this theorem, we define an auxiliary function $\tilde{g}_{ip}^{(t)}(\x)$ by truncating $\tilde{g}_{ip}(\x)$ when ${\x^\top \v} \notin [-\frac{1-\delta}{R_d\epsilon}, \frac{1-\delta}{R_d\epsilon}]$ to be zero, then $  |\tilde{g}_{ip}^{(t)}(\x)-\tilde{g}_{ip}(\x)| \leq |\tilde{g}_{ip}(\x)| \leq \frac{1}{2\pi} \sqrt{\frac{4R_d}{1-\delta}}/(\frac{2R_d}{1-\delta}\cdot {\x^\top \v}) \leq \epsilon/2$ when ${\x^\top \v} \notin [-\frac{\sqrt{1-\delta}}{\pi \sqrt{R_d}\epsilon}, \frac{\sqrt{1-\delta}}{\pi \sqrt{R_d}\epsilon}]$. Applying the aforementioned theorem to $\tilde{g}_{ip}^{(t)}(\x)$, we have $f_1(\x)=a+\sum_{i=1}^k \alpha_i \cdot \sigma(\beta_i {\x^\top \v}-\gamma_i)$ that can approximate $\tilde{g}_{ip}^{(t)}$ in terms of $\underset{\x \in \mathbb{R}^d}{\sup} \ \  |\tilde{g}_{ip}^{(t)}(\x)-f_1(\x)|\leq \epsilon/2$.
Here, $r_1=-\frac{\sqrt{1-\delta}}{\pi\sqrt{R_d}\epsilon}$ and $r_2=\frac{\sqrt{1-\delta}}{\pi\sqrt{R_d}\epsilon}$, the number of neurons needed is no more than $c_\sigma \frac{2\sqrt{1-\delta}L}{\pi\sqrt{R_d}\epsilon} \leq c_\sigma \frac{2\times \sqrt{5.264} \sqrt{1-\delta}L}{\pi\epsilon} = C_1 c_\sigma$. Because the geometric meaning of $1/R_d$ is the volume of $d$-hyperball, which is upper bounded by $(1/R_d)_{d=5}\approx 5.264$\footnote{\url{https://en.wikipedia.org/wiki/Volume_of_an_n-ball}}, the number of needed neurons is irrelevant to $d$. Furthermore, $f_1$ fulfills
\begin{equation}
\begin{aligned}
  &\underset{\x \in \mathbb{R}^d}{\sup} |\tilde{g}_{ip}(\x)-f_1(\x)| \\
 \leq &  \underset{\x \in \mathbb{R}^d}{\sup} |\tilde{g}_{ip}(\x)-\tilde{g}_{ip}^{(t)}(\x)| + \underset{\x \in \mathbb{R}^d}{\sup} |\tilde{g}_{ip}^{(t)}(\x)-f_1(\x)|
 \leq \epsilon.
\end{aligned}
\end{equation}
$\hfill\square$

\textit{Proof of \textbf{Theorem} \ref{thm:main}. }
Theorems \ref{thm:quadratic} and \ref{thm:core} suggest that a conventional network $f_1$ can express $\tilde{g}_{ip}$ with a polynomial number of neurons, and a quadratic network $f_2$ can express $\tilde{g}_{r}$ with a polynomial number of neurons. Let a heterogeneous network be $f_3=f_1+f_2$, $f_3$ can straightforwardly express $\tilde{g}_{ip}+\tilde{g}_{r}$ with a polynomial number of neurons. 

Next, we need to show how to deduce from $\mathbb{E}_{\x \sim \mu} [f_2-\tilde{g}_{ip}]^2 \geq {\epsilon_1}$ to $\mathbb{E}_{\x \sim \mu} [f_2-(\tilde{g}_{ip}+\tilde{g}_{r})]^2 \geq \epsilon_1$. Here, we slightly abuse $\epsilon_1$ for succinctness. Both formulas mean that there exists a gap between functions. We can rewrite $\tilde{g}_{r}$ as 
\begin{equation}
    \tilde{g}_{r}(\|\x\|)= \tilde{g}_{r}\circ \sqrt{\|\x\|^2}.
\end{equation}
Thus, we can express $\tilde{g}_{r}$ by a quadratic neuron using a special activation function {$\sigma'=\tilde{g}_{r}\circ \sqrt{\cdot}$}. It holds that {$\sigma'(x) \leq C'(1+|x|^{\alpha'})$} {for certain constants $C'$ and $\alpha'$}, since both $\tilde{g}_{r}$ and $s$ are bounded. As a result, regarding $f_2-\tilde{g}_{r}$ as a quadratic network, we can get $ \mathbb{E}_{\x \sim \mu} [f_2-(\tilde{g}_{ip}+\tilde{g}_{r})]^2 =\mathbb{E}_{\x \sim \mu} [(f_2-\tilde{g}_{r})-\tilde{g}_{ip}]^2\geq \epsilon_1$.

Similarly, $\tilde{g}_{ip}$ can be re-expressed by a conventional neuron with a bounded activation. We can also get $\mathbb{E}_{\x \sim \mu} [f_1-(\tilde{g}_{ip}+\tilde{g}_{r})]^2 \geq \epsilon_2$ from $\mathbb{E}_{\x \sim \mu} [f_1-\tilde{g}_{r}]^2 \geq {\epsilon_2}$.

$\hfill\square$. 

\textbf{Remark 2}. Will \textbf{Theorem \ref{thm:main}} change if we use more layers? {Indeed, our theorem and the associated proof cannot be extended into deeper networks. As Table \ref{results:sketch} shows, the key ingredient of our main theorem is constructing $\tilde{g}_{ip}+\tilde{g}_{r}$, wherein $\tilde{g}_{ip}$ can be approximated by a one-hidden-layer conventional network with a polynomial number of neurons but a one-hidden-layer quadratic network needs an exponential number of neurons, and $\tilde{g}_{r}$ can be approximated by a one-hidden-layer quadratic network with a polynomial number of neurons but a one-hidden-layer conventional network needs an exponential number of neurons. The main reason accounting for such exponential differences is that it is hard for a one-hidden-layer conventional network to represent a radial function and a one-hidden-layer quadratic network to represent an inner-product-based function. However, if one allows a network to have more hidden layers, a conventional network can use a polynomial number of neurons to represent the function $g(x)=x^2$ first, and then use $x^2$ to construct a radial function with a polynomial number of neurons. Thus, the total number of neurons is still polynomial. In this light, although heterogeneous networks still enjoy the advantage of efficiency in representing $\tilde{g}_{ip}+\tilde{g}_{r}$, the difference is not exponential. A similar analysis also holds for $\tilde{g}_{ip}$ and the quadratic.} We leave the extension to deeper networks as our future work. 

\fl{Despite assuming a simple structure, our main theorem holds significant implications for unsupervised anomaly detection. As per the manifold hypothesis, anomalies often exhibit evident abnormal features in a low-dimensional space but become hidden or unnoticeable in a high-dimensional space~\cite{fefferman2016testing}. Autoencoders are employed to extract the latent features of the data, thereby amplifying the differences between normal instances and anomalies~\cite{ruff2021unifying}. When dealing with high-dimensional and complicated data, particularly in the context of unsupervised learning, our theorem assures that heterogeneous networks can approximate more flexible, more expressive, and nonlinear decision functions, thereby demonstrating superior representational ability compared to conventional networks. Furthermore, in our primary experiments, encoders and decoders for anomaly detection are designed to consist of a single hidden layer. Such heterogeneous networks indeed achieve competitive performance, as suggested by our theory, thereby showcasing high efficiency in the design of anomaly detection autoencoders.}

\section{Heterogeneous Autoencoder}
Given the input $\boldsymbol{x} \in \mathbb{R}^n$, let $E(\cdot)$ be the encoder and $D(\cdot)$ be the decoder, an autoencoder attempts to learn a mapping from the
input to itself by optimizing the following loss function: 
$\underset{\boldsymbol{W}}{\min}~\mathcal{J}(\boldsymbol{W};\boldsymbol{x})=\Vert D(E(\boldsymbol{x}))-\boldsymbol{x} \Vert^2$,
where $W$ is a collection of all network parameters for simplicity. $\boldsymbol{z}= E(\boldsymbol{x})$ is the learned embedding in the
low-dimensional space. {The objective of the autoencoder is to learn $\boldsymbol{z}$ such that the output layer can accurately reproduce the original input. Within this structure, we refer to autoencoders that only use conventional neurons or quadratic neurons in the encoder and decoder as homogeneous autoencoders, which are referred to as AE and QAE, respectively.}

{Our theoretical derivation suggests combining the conventional and quadratic neurons in a network is a promising direction to explore. But it does not provide specific guidelines on which structure is the best for HAEs. Thus, we can only heuristically design the structures of the autoencoder based on a symmetry consideration: 1) utilizing equal depth and width in the encoder and decoder, which fulfills the standard practice of autoencoders; 2) arranging conventional and quadratic layers in a symmetric fashion to put them on an equal booting, since we don't have any prior knowledge which neuron type is more important. Symmetric consideration can naturally result in three heterogeneous autoencoder designs}, referred to as HAE-X, HAE-Y, and HAE-I, respectively, as shown in Figure \ref{fig:hae}.

$\bullet$ The HAE-X conjugates a conventional and a quadratic branch in the encoder which are fused in the intermediate layer by summation. Then, this sum is transmitted to a conventional and a quadratic branch in the decoder. Next, the yields of two branches in the decoder are summed again to produce the final model output. {This approach not only facilitates the heterogeneity in the model but also preserves its existing backbone.} 

$\bullet$ The HAE-Y is a simplification of the HAE-X, which respects the HAE-X in the encoder but only has a quadratic branch in the decoder. The reason why we put quadratic neurons into the decoder is that quadratic neurons are more capable of information extraction, which can help reconstruct the input more accurately. {HAE-Y explores how the model's outputs are affected when an encoder and a decoder possess a similar number of neurons but different types of neurons. This serves to reduce the number of parameters and complexity of heterogeneous autoencoders.}

$\bullet$ Other than using conventional and quadratic layers in parallel as the HAE-X and HAE-Y models do, the HAE-I interlaces conventional and quadratic layers in both an encoder and a decoder while keeping the symmetry. In the HAE-I model, we make the first layer a quadratic layer, as the quadratic layer is capable of extracting more information and ensures sufficient information to pass for subsequent layers. Since hardly we know which heterogeneous design is preferable, we examine them all in the experiments.

\begin{figure}[htb]
\center{\includegraphics[width=\linewidth]{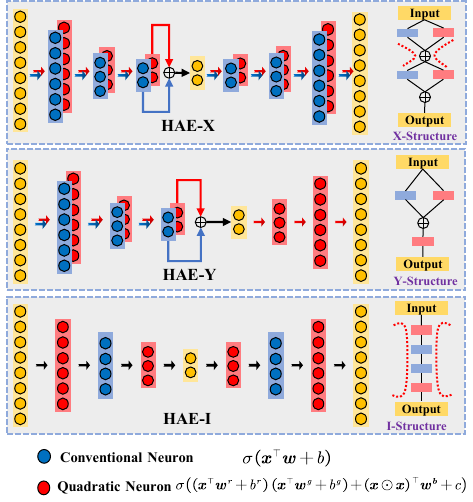}}
\caption{Three heterogeneous autoencoder designs.} 
\label{fig:hae}
\end{figure}

\textbf{Training strategies.} 
The training process of a network with quadratic neurons suffers the risk of collapse due to the high nonlinearity \cite{fan2021expressivity}. For example, the output function of a quadratic network with $L$ layers will be a polynomial of $2^L$ degrees, which is too high to keep the network stable during training. 
To fix this issue, Fan \textit{et al.} \cite{fan2021expressivity} proposed the so-called ReLinear algorithm, where the parameters in a quadratic neuron are initialized as $\boldsymbol{w}^g=0,\boldsymbol{w}^b=0,\boldsymbol{c}=0$ and $\boldsymbol{b}^g=1$, while $\boldsymbol{w}^r$ and $b^r$ follow the random initialization. Consequently, during the initialization stage, a quadratic neuron degenerates to a conventional neuron. Then, during the training stage, different learning rates are cast for $(\boldsymbol{w}^r, b^r)$ and $(\boldsymbol{w}^g, \boldsymbol{w}^b,c, b^g)$: a normal learning rate for the former and a relatively smaller learning rate for the latter. By doing so, the nonlinearity of quadratic terms are constrained, and the learned quadratic network can avoid the training instability. The schematic illustration of the ReLinear algorithm is shown in Figure \ref{fig:relinear}. 

\begin{figure}[htbp]
    \centering
    \includegraphics{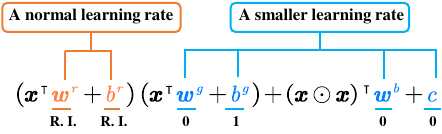}
    \caption{ReLinear: a training strategy of quadratic networks (R. I. means random initialization).} 
    \label{fig:relinear}
    \vspace{-0.5cm}
\end{figure}

\section{Experiments} 
\label{sec:Experiment}

In this section, we apply the HAE to solve the anomaly detection task, by which we can also investigate if the HAE is a powerful model as suggested by theory. The anomaly detection is to distinguish between the outliers and the normal via a threshold of contamination percentage after calculating the outlier score of each sample with a model. The experiments are twofold. First, we compare HAE with a purely conventional or quadratic autoencoder. Second, we compare the HAE with either classical or SOTA methods. The results show that the HAE outperforms its competitors on 8 benchmarks. {All quadratic neurons, except those used in an ablation study, are based on the standard quadratic neurons (Eq. \ref{qneq1})}. {Our code is available  for readers' free download and evaluation \footnote{\url{https://github.com/asdvfghg/Heterogeneous_Autoencoder_by_Quadratic_Neurons}}.}

\textbf{Tabular datasets descriptions.} We choose eight datasets from ODDS (Outlier Detection DataSets)\footnote{\url{http://odds.cs.stonybrook.edu}}: multi-dimensional point datasets and seven datasets from DAMI\footnote{\url{http://www.dbs.iﬁ.lmu.de/research/outlier-evaluation/DAMI}} \cite{campos2016evaluation}, which consist of a variety of tabular datasets from different fields. We eliminate duplicate datasets from DAMI repositories. Data in these datasets are labeled into two classes: normal and abnormal. For instance, the Wisconsin-Breast Cancer (WBC) dataset includes measurements for breast cancer cases, classified as either benign or malignant. The characteristics of all datasets are summarized in Table \ref{tb:datasets}. In all experiments, we use 80\% normal samples for training and contaminated data for testing. The test set includes normal samples that are not used in the training set and all abnormal samples. 

\begin{table}[H]
\centering
  \caption{Statistics of anomaly detection benchmark datasets.}
{
\begin{tabular}{@{}l|lccc@{}}
\hline
Repositories            & Datasets                        & \#Samples & \#Features & Outlier Ratio \\ \hline
\multirow{8}{*}{OODs} & Arrhythmia                     & 452       & 274          & 14.60\%       \\
                      & Glass                          & 214       & 9            & 4.21\%        \\
                      & Musk                           & 3062      & 166          & 3.16\%        \\
                      & Optdigits                      & 5216      & 64           & 2.88\%        \\
                      & Pendigits                      & 6870      & 16           & 2.27\%        \\
                      & Pima                           & 768       & 8            & 34.90\%       \\
                      & Verterbral                     & 240       & 6            & 12.50\%       \\
                      & Wbc                            & 378       & 30           & 5.56\%        \\ \hline
\multirow{7}{*}{DAMI} & {ALOI}       & 50000     & 27           & 3.01\%        \\
                      & {Ionosphere} & 351       & 7            & 35.89\%       \\
                      & {KDDCUP99}   & 60632     & 41           & 0.41\%        \\
                      & {Shuttle}    & 1013      & 9            & 1.28\%        \\
                      & {Waveform}   & 3443      & 21           & 2.90\%        \\
                      & {WDBC}       & 367       & 30           & 2.72\%        \\
                      & {WPBC}       & 198       & 33           & 23.7\%        \\ \hline
\end{tabular}}
  \label{tb:datasets}%
\end{table}

{\textbf{Bearing datasets descriptions}. We conduct experiments on bearing fault detection, which is a high-dimensional problem. Bearings are widely used in rotating machinery, but they are prone to damage. Anomaly detection determines whether the bearing is working under normal conditions, which is the key to bearing health management. Different from tabular data, bearing fault detection is based on long vibration signals. We use two datasets of run-to-failure bearing data to validate our method: one from the public\cite{wang2018hybrid} and the other provided by a nuclear power plant in Hainan Province, China.\\

(1) XJTU-SY dataset is collected by the Institute of Design Science and Basic Component at Xi’an Jiaotong University (XJTU) and the Changxing Sumyoung Technology Co., Ltd. (SY) \cite{wang2018hybrid}. Two accelerometers of type PCB 352C33 (sampling frequency set to 25.6 kHz) are mounted at 90° on the housing of the tested bearings, which measure the vibration of the bearing. The accelerometers record 1.28s (32768 points) per minute until the bearing fails completely. We select the Bearing2\_5 data which runs at 2250 rpm (37.5 Hz) and 11 kN load. The bearing’s lifetime is 5 h 39 min and ends with an outer race fault.\\

(2) The seawater booster pump (SBP) data are gathered from Hainan Nuclear Power Co., Ltd. The SBP transfers cooling water to steam turbines and serves as a crucial component of the auxiliary cooling water system in nuclear power plants. Given the intricate nature of nuclear plants, mechanical systems require periodic monitoring to ensure their reliability. Figure \ref{fig:sbp} illustrates that acceleration sensors are placed at four points on both the pump and the motor when performing measurement. The vibration signal for the seawater booster pumps is measured every two months, with a sampling rate of 16.8 kHz and a recording period of 0.4 seconds, which results in 6,721 data points. The faulty and healthy signals are shown in Figure \ref{fig:sbpsig}. Data in this study are collected from 2015 to 2023. During this period, a bearing fault at the outer race is observed.

\begin{figure}[htbp]
    \centering
    \includegraphics{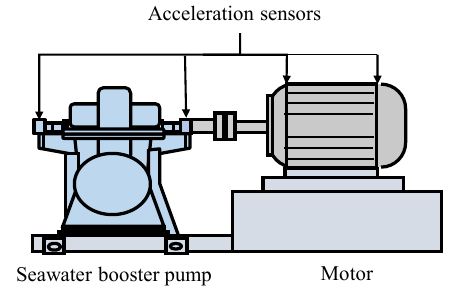}
    \caption{The measurement structure of seawater booster pump.}
    \label{fig:sbp}
\end{figure}

\begin{figure}[htbp]
    \centering
    \includegraphics{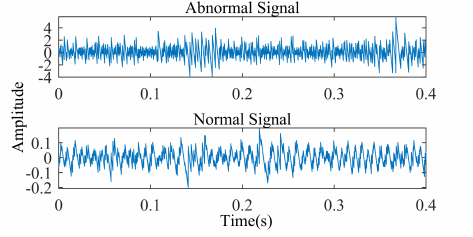}
    \caption{Abnormal and normal signals of the seawater booster pump.}
    \label{fig:sbpsig}
\end{figure}

We resample all signals to 1024 points per sample and use Fast Fourier Transform to split a single side of it into 512 points. The summary of the two datasets is in Table \ref{tab:summbearing}. Outliers in the XJTU dataset are identified based on the method described in \cite{9399476}. However, it should be noted that unlike simulation tests, the availability of faulty data from real industrial scenes is very limited. As a result, only 6 samples were available for outlier detection in the SBP dataset, which makes this task pretty challenging.}


\begin{table}[htbp]
\centering
\caption{The summary of two bearing datasets. a$\times$b represents the number of samples and sample dimensions.}

\begin{tabular}{@{}lcc@{}}
\toprule
Datasets                             & XJTU      & SBP     \\ \midrule
\#Raw Sample                         & 334$\times$32768 & 57$\times$6721 \\
\#Abnormal  Raw Sample                & 222$\times$32768 & 1$\times$6721  \\
\midrule
\#Resample                           & 8544$\times$512  & 342$\times$512 \\
\#Abnormal Resample                   & 4671$\times$512  & 6$\times$512   \\
\midrule
Outlier Ratio & 54.66\%   & 1.75\%  \\ \bottomrule
\end{tabular}
\label{tab:summbearing}
\end{table}

{\textbf{Baseline Methods.} We compare the proposed heterogeneous autoencoders with 7 baselines, SUDO \cite{zhao2021suod}, SO-GAAL \cite{liu2019generative}, DeepSVDD \cite{ruff2018deep},  DAGMM \cite{zong2018deep},  RCA \cite{liu2021rca}, AE, and QAE (an autoencoder based on quadratic neurons). All these baselines are either classical methods with many citations or published in prestigious venues. To implement them, except for autoencoders, we utilize packages in the PyOD (Python Outlier Detection) package\footnote{\url{https://github.com/yzhao062/pyod}} \cite{zhao2019pyod} or follow official scripts\footnote{\url{https://github.com/illidanlab/RCA}}. We program all autoencoder models by ourselves. We use the area under the receiver operating characteristic curve (AUC) and precision (PRE) as evaluation metrics, which are widely used in unsupervised anomaly detection \cite{aggarwal2017outlier, li2022ecod,wang2020advae}.}

\begin{table*}[htbp]
\centering
\caption{Comparison of AUCs for different anomaly detection baseline methods. Bold-faced numbers are the best and underlined numbers are the second best.}
{
\begin{tabular}{llccccccc|ccc}
\hline
Repositories            & Datasets   & SUOD           & SO-GAAL     & DeepSVDD       & DAGMM          & RCA            & AE             & QAE            & HAE-X          & HAE-Y          & HAE-I          \\ \hline
\multirow{8}{*}{OODs} & Arrhythmia & 0.770          & 0.568       & 0.704          & 0.603          & 0.806          & 0.807          & 0.807          & \textbf{0.832} & 0.816          & {\ul 0.817}    \\
                      & Glass      & \textbf{0.775} & 0.386       & 0.517          & 0.474          & 0.602          & 0.571          & 0.587          & {\ul 0.605}    & 0.608          & 0.591          \\
                      & Musk       & 0.568          & 0.438       & 0.502          & 0.903          & \textbf{1.000} & \textbf{1.000} & \textbf{1.000} & \textbf{1.000} & \textbf{1.000} & \textbf{1.000} \\
                      & Optdigits  & 0.469          & 0.367       & 0.494          & 0.290          & 0.622          & 0.652          & 0.599          & {\ul 0.772}    & \textbf{0.787} & 0.668          \\
                      & Pendigits  & 0.562          & 0.657       & 0.507          & 0.872          & 0.856          & 0.943          & 0.949          & {\ul 0.962}    & 0.943          & \textbf{0.967} \\
                      & Pima       & 0.624          & 0.292       & 0.485          & 0.531          & {\ul 0.711}          & 0.657          & 0.652          & \textbf{0.739} & 0.695          &  0.701    \\
                      & Verterbral & 0.273          & 0.634       & 0.279          & 0.487          & 0.456          &  0.568    & 0.567          & \textbf{0.574} & {\ul 0.573}          & {\ul 0.573}          \\
                      & Wbc        & \textbf{0.950} & 0.078       & 0.834          & 0.834          & 0.906          & 0.924          & {\ul 0.932}          & { 0.926}    & 0.876          & 0.915          \\ \hline
\multirow{7}{*}{DAMI} & ALOI       & {\ul 0.784}    & 0.542       & 0.531          & \textbf{1.000} & 0.554          & 0.555          & 0.555          & 0.556          & 0.547          & 0.553          \\
                      & Ionosphere & 0.850          & 0.688       & \textbf{0.979} & 0.904          & 0.917          & 0.907          & 0.906          & {\ul 0.929}    & 0.910          & 0.910          \\
                      & KDDCUP99   & 0.953          & 0.889       & 0.914          & \textbf{1.000} & {\ul 0.989}    & {\ul 0.989}    & 0.986          & {\ul 0.989}    & {\ul 0.989}    & {\ul 0.989}    \\
                      & Shuttle    & 0.493          & { 0.534} & \textbf{0.892} & {\ul 0.652}          & 0.458          & 0.365          & 0.371          & 0.473          & 0.500          & 0.495          \\
                      & Waveform   & 0.655          & 0.382       & 0.635          & 0.638          & \textbf{0.704} & 0.658          & 0.670          & {\ul 0.692}    & 0.651          & 0.680          \\
                      & WDBC       & 0.972          & 0.019       & 0.973          & 0.963          & {\ul 0.990} & \textbf{0.997}          & 0.986          & { 0.985}    & 0.978          & 0.980          \\
                      & WPBC       & {\ul 0.587}    & 0.457       & \textbf{0.703} & 0.495          & 0.460          & 0.352          & 0.449          & 0.438          & 0.443          & 0.439          \\ \hline
\multicolumn{2}{l}{Avg. $\uparrow$}           & 0.686          & 0.462       & 0.663          & 0.710          & 0.735          & 0.730          & 0.734          & \textbf{0.765} & {\ul 0.754}    & 0.752          \\
\multicolumn{2}{l}{Avg. Rank $\downarrow$}      & 6.250          & 8.438       & 7.250          & 6.438          & 4.875          & 5.500          & 5.125          & \textbf{3.063} & 4.313          & {\ul 3.750}    \\ \hline
\end{tabular}}
\label{tab:anomalyresultsauc}
\end{table*}

\begin{table*}[htbp]
\centering
\caption{Comparison of PREs for different anomaly detection baseline methods. Bold-faced numbers are the best and underlined numbers are the second best.}
{
\begin{tabular}{llccccccc|ccc}
\hline
Repositories            & Datasets   & SUOD          & SO-GAAL & DeepSVDD       & DAGMM       & RCA            & AE             & QAE         & HAE-X          & HAE-Y          & HAE-I          \\ \hline
\multirow{8}{*}{OODs} & Arrhythmia & 0.726          & 0.585   & 0.685          & 0.271       & 0.699          & 0.739          & 0.725       & \textbf{0.758} & 0.740          & {\ul 0.745}    \\
                      & Glass      & 0.593          & 0.516   & 0.573          & 0.497       & {\ul 0.596}    & 0.572          & 0.576       & 0.582          & 0.589          & \textbf{0.682} \\
                      & Musk       & \textbf{0.794} & 0.648   & 0.748          & 0.430       & 0.572          & 0.768          & 0.766       & {\ul 0.775}    & 0.773          & 0.773          \\
                      & Optdigits  & 0.448          & 0.447   & 0.473          & 0.513       & \textbf{0.566} & 0.466          & 0.447       & 0.514          & {\ul 0.515}    & 0.481          \\
                      & Pendigits  & 0.737          & 0.615   & 0.541          & 0.585       & 0.553          & 0.721          & 0.723       & \textbf{0.757} & 0.726          & {\ul 0.753}    \\
                      & Pima       & 0.638          & 0.382   & 0.511          & 0.567       & 0.541          & 0.591          & 0.636       & \textbf{0.650} & 0.629          & {\ul 0.639}    \\
                      & Verterbral & 0.422          & 0.559   & 0.427          & 0.461       & \textbf{0.656} & 0.567          & 0.495       & 0.495          & 0.574          & {\ul 0.593}    \\
                      & Wbc        & 0.745          & 0.360   & 0.657          & 0.730       & 0.621          & 0.739          & {\ul 0.750} & \textbf{0.829} & 0.704          & 0.712          \\ \hline
\multirow{7}{*}{DAMI} & ALOI       & \textbf{0.576} & 0.432   & 0.519          & 0.432       & 0.527          & 0.516          & 0.527       & 0.518          & {\ul 0.529}    & 0.528          \\
                      & Ionosphere & 0.773          & 0.631   & \textbf{0.915} & 0.787       & 0.769          & 0.808          & 0.812       & {\ul 0.832}    & 0.816          & 0.816          \\
                      & KDDCUP99   & 0.589          & 0.490   & 0.793          & 0.489       & 0.515          & 0.726          & 0.490       & 0.870          & \textbf{0.872} & {\ul 0.871}    \\
                      & Shuttle    & 0.469          & 0.482   & \textbf{0.835} & 0.495       & {\ul 0.530}    & 0.500          & 0.469       & 0.513          & 0.512          & 0.512          \\
                      & Waveform   & 0.435          & 0.517   & \textbf{0.655} & {\ul 0.600} & 0.567          & 0.527          & 0.548       & 0.557          & 0.548          & 0.547          \\
                      & WDBC       & {\ul 0.874}    & 0.437   & 0.809          & 0.714       & 0.563          & \textbf{0.993} & 0.864       & 0.859          & 0.748          & 0.747          \\
                      & WPBC       & 0.477          & 0.474   & \textbf{0.674} & {\ul 0.495} & 0.442          & 0.315          & 0.466       & 0.493          & {\ul 0.495}    & 0.486          \\ \hline
\multicolumn{2}{l}{Avg. $\uparrow$}           & 0.620          & 0.505   & 0.654          & 0.538       & 0.581          & 0.637          & 0.620       & \textbf{0.667} & 0.651          & {\ul 0.659}    \\
\multicolumn{2}{l}{Avg. Rank$\downarrow$}      & 5.313          & 8.813   & 5.375          & 7.313       & 6.125          & 5.813          & 5.938       & \textbf{3.000} & 3.813          & {\ul 3.500}    \\ \hline
\end{tabular}}
\label{tab:anomalyresultspre}
\end{table*}

{\textbf{Hyperparameters.} Suppose that the dimension {(the number of features of the input)} of the input is $d$, the structures of all autoencoders are $(d-\frac{d}{2}-\frac{d}{4}-\frac{d}{2}-d)$. The activation function is ReLU. The QAE and HAE are initialized with ReLinear \cite{fan2021expressivity}, whereas the AE is initialized with a random initialization. The dropout probability is $0.5$. The number of epochs is $100$. We use grid search for the learning rate and batch size.}\vspace{0.2cm}

\textbf{Performance comparison.} Table \ref{tab:anomalyresultsauc} and Table \ref{tab:anomalyresultspre} display the AUCs {and PREs}  of all models, where we have the following observations. First, {by comparing the AUCs,} in the majority of datasets, HAEs are the best or second best among all benchmark methods, where HAE-X has three best performances and {six} second best, HAE-Y has two best and {two} second best. Although {DeepSVDD} has three best results, its AUCs are lower on other datasets. Second, HAEs demonstrated the most consistent performance on average AUC and average rank, with all three HAE models leading.{At last, the same trend is observed when considering the PREs, with HAEs consistently outperforming other baselines in terms of average rank. While DeepSVDD showed competitive results on the DAMI repository, its average PRE is 1.3\% lower than the best-performing HAE-X model. Note that the HAEs present varying performances on different datasets, indicating that the design of a heterogeneous autoencoder is still an open question.} As a summary, our anomaly detection experiments show that the employment of diverse neurons is beneficial, supporting the earlier-developed theory for heterogeneous networks.

{Table \ref{tab:bearingresults} presents the classification results, indicating that HAE-based methods outperform other competitors in both datasets. It should be noted that DAGMM is not included as a comparison method because its training fails. The SUOD model performs well in the SBP but achieves the limited performance in XJTU dataset, while the GAAL is the opposite. The RCA model demonstrates the best precision in the XJTU dataset and the best recall in the SBP dataset. However, it fails in some other metrics. Autoencoder-based models demonstrated better results in both datasets, implying that autoencoder-based models are relatively suitable for handling higher dimensional data. In particular, HAE models are consistently better than AE and QAE, which shows that the combination of different neuron types is a wise strategy.}

\begin{table}[htbp]
\centering
\caption{Classification results of all baseline methods over two bearing fault datasets. Bold-faced numbers are better results.}

\begin{tabular}{@{}llcccc@{}}
\toprule
Datasets              & Methods  & AUC            & Pre            & Recall         & F1             \\ \midrule
\multirow{9}{*}{XJTU} & SUOD     & 0.874          & 0.714          & 0.668          & 0.686          \\
                      & SO-GAAL  & 0.958          & 0.896          & 0.724          & 0.777          \\
                      & DeepSVDD & 0.828          & 0.633          & 0.652          & 0.641          \\
                      & RCA      & 0.946          & \textbf{1.000}          & 0.528          & 0.691          \\
                      & AE       & 0.942          & 0.755          & 0.712          & 0.731          \\
                      & QAE      & 0.944          & 0.790          & 0.715          & 0.744          \\
                      & HAE-X    & 0.957          & 0.956          & \textbf{0.740} & \textbf{0.803} \\
                      & HAE-Y    & {0.958} & 0.957          & 0.737          & 0.800          \\
                      & HAE-I    & \textbf{0.961}          & {0.959} & 0.730          & 0.795          \\ \midrule
\multirow{9}{*}{SBP}  & SUOD     & \textbf{1.000} & 0.875          & 0.985          & 0.921          \\
                      & SO-GAAL  & 0.831          & 0.723          & 0.894          & 0.781          \\
                      & DeepSVDD & \textbf{1.000} & 0.700          & 0.967          & 0.769          \\
                      & RCA      & \textbf{1.000} & 0.018          & \textbf{1.000}          & 0.035          \\
                      & AE       & \textbf{1.000} & 0.850          & 0.956          & 0.911          \\
                      & QAE      & \textbf{1.000} & 0.873          & 0.963          & 0.904          \\
                      & HAE-X    & \textbf{1.000} & 0.876          & {0.997} &{0.933} \\
                      & HAE-Y    & \textbf{1.000} & \textbf{0.883} &  0.995          & \textbf{0.935}          \\
                      & HAE-I    & \textbf{1.000} & 0.881          & 0.990          & 0.932          \\ \bottomrule
\end{tabular}
\label{tab:bearingresults}
\end{table}

{\textbf{Visualization.} Because HAE variants show a better performance than a conventional one, we perform a learned embedding visualization to understand what these methods have learned. Here, with the optidigits dataset, we conduct t-SNE \cite{tsne} to visualize the latent space learned by AE and HAEs into 3D space. As shown in Figure \ref{fig:visual3d}, compared to the AE result, abnormal data show distinguishable and concentration clusters by HAEs, suggesting that a heterogeneous autoencoder better HAE better expresses the divergences between the different categories of samples.}

\begin{figure}[h]
    \centering
    \subfloat[AE]{\includegraphics[width=0.5\linewidth]{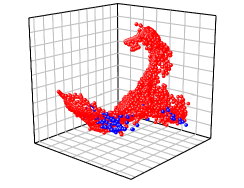}} 
    \subfloat[HAE-X]{\includegraphics[width=0.5\linewidth]{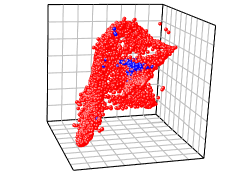}}\\
    \subfloat[HAE-Y]{\includegraphics[width=0.5\linewidth]{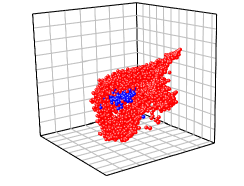}}
    \subfloat[HAE-I]{\includegraphics[width=0.5\linewidth]{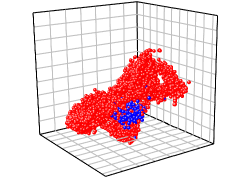}}
    \caption{The t-SNE results of four autoencoders on optidigits. The red and blue points are learned latent variables from abnormal and normal data, respectively. It is observed that the outliers are
more concentrated in HAEs than AE.}
    \label{fig:visual3d}
\end{figure}


\section{Ablation Study}

\textbf{Neuron types in HAEs}. Earlier, we compared the performance between HAEs and purely conventional or quadratic autoencoders. However, the proposed HAEs use different architectures from AE and QAE to better synergize the power of conventional and quadratic neurons. Consequently, it is not sure if the superior performance of HAEs is due to the synergy of different neurons or due to the architecture. To resolve this ambiguity, we replace neurons in HAEs to prototype homogeneous autoencoder, as Table \ref{tab:ablastruct} shows. Then, we conduct experiments on 15 anomaly detection 
datasets to compare the difference in performance between HAEs and homogeneous autoencoders using the same architecture. \\

\begin{table}[htbp]
\centering
\caption{The structure of autoencoders with different neurons. "Q" represents quadratic neurons, "C" represents conventional neurons, \fl{"Yc" denotes the HAE-Y structure using a conventional decoder, and "Ic" denotes the HAE-I structure using the conventional layers at first.}}
\begin{tabular}{@{}lcc@{}}
\toprule
 Structures       & Encoder & Decoder \\ \midrule
HAE-X   & Q+C     & Q+C     \\
QAE-X   & Q+Q     & Q+Q     \\
AE-X    & C+C     & C+C     \\
\midrule
HAE-Y   & Q+C     & Q       \\
QAE-Y   & Q+Q     & Q       \\
AE-Y    & C+C     & C       \\
\midrule
HAE-Yc & Q+C     & C       \\
HAE-I  & Q$\to$C     & C$\to$Q     \\
HAE-Ic & C$\to$Q     & Q$\to$C     \\ \bottomrule
\end{tabular}
\label{tab:ablastruct}
\end{table}

The results are presented in Tables \ref{tab:ablaacc} and \ref{tab:ablapre}. First, both AUC and precision metrics show that the performance of HAEs is superior to their homogeneous counterparts. Specifically, HAE-X achieves the highest AUC scores on 11 datasets, while HAE-Y performs best on 8 datasets and HAE-I outperforms its counterparts on 10 datasets. All quadratic neuron-based autoencoders are better than conventional ones, which underscores the powerful feature representation capabilities of quadratic neurons. At last, we observe that HAE-I with quadratic neurons in the first layer outperforms those with conventional neurons (HAE-Ic), suggesting that quadratic neurons are more adept at extracting hidden features. \\

\begin{table*}[htbp]
\centering
\caption{Comparison of AUCs for all structures. Bold-faced numbers are the best in an identical autoencoder structure with different neurons.}
{
\begin{tabular}{@{}lccc|cccc|cc@{}}
\toprule
Datasets   & HAE-X          & QAE-X          & AE-X           & HAE-Y          & QAE-Y          & AE-Y  & HAE-Yc        & HAE-Iq        & HAE-Ic        \\ \midrule
arrhythmia & \textbf{0.832} & 0.819          & 0.815          & 0.816          & \textbf{0.826} & 0.813 & 0.816          & \textbf{0.817} & 0.814          \\
glass      & \textbf{0.605} & 0.600          & 0.551          & \textbf{0.608} & 0.585          & 0.554 & 0.559          & \textbf{0.591} & 0.581          \\
musk       & 1.000          & 1.000          & 1.000          & 1.000          & 1.000          & 1.000 & 1.000          & 1.000          & 1.000          \\
optdigits  & \textbf{0.772} & 0.604          & 0.476          & \textbf{0.787} & 0.656          & 0.475 & 0.487          & \textbf{0.668} & 0.510          \\
pendigits  & \textbf{0.962} & 0.945          & 0.931          & 0.943          & \textbf{0.947} & 0.934 & 0.935          & \textbf{0.967} & 0.937          \\
pima       & \textbf{0.739} & 0.690          & 0.577          & \textbf{0.695} & 0.606          & 0.582 & 0.581          & \textbf{0.701} & 0.581          \\
verterbral & \textbf{0.574} & 0.481          & 0.571          & \textbf{0.573} & 0.561          & 0.570 & 0.571          & \textbf{0.573} & 0.572          \\
wbc        & \textbf{0.926} & 0.909          & 0.857          & \textbf{0.876} & 0.868          & 0.857 & 0.856          & \textbf{0.915} & 0.855          \\
ALOI       & \textbf{0.556} & 0.553          & 0.546          & \textbf{0.547} & 0.554          & 0.546 & \textbf{0.547} & \textbf{0.553} & 0.547          \\
Ionosphere & \textbf{0.929} & 0.920          & 0.906          & 0.910          & \textbf{0.919} & 0.905 & 0.907          & \textbf{0.910} & 0.908          \\
KDDCUP99   & 0.989          & \textbf{0.994} & 0.989          & 0.989          & 0.989          & 0.989 & 0.989          & 0.989          & 0.989          \\
Shuttle    & 0.473          & 0.385          & \textbf{0.488} & \textbf{0.500} & 0.453          & 0.491 & 0.491          & 0.495          & 0.495          \\
Waveform   & \textbf{0.692} & 0.681          & 0.643          & 0.651          & \textbf{0.688} & 0.655 & 0.647          & \textbf{0.680} & 0.657          \\
WDBC       & \textbf{0.985} & 0.981          & 0.978          & 0.978          & \textbf{0.980} & 0.979 & \textbf{0.980} & 0.980          & \textbf{0.981} \\
WPBC       & 0.438          & 0.440          & \textbf{0.445} & \textbf{0.443} & 0.439          & 0.442 & 0.442          & 0.439          & \textbf{0.442} \\ \midrule
Avg.       & \textbf{0.765} & 0.733          & 0.718          & \textbf{0.754} & 0.738          & 0.719 & 0.721          & \textbf{0.752} & 0.725          \\ \bottomrule
\end{tabular}}
\vspace{-0.2cm}
\label{tab:ablaacc}
\end{table*}

\begin{table*}[htbp]
\centering
\caption{Comparison of \fl{PREs} for all structures. Bold-faced numbers are the best in an identical autoencoder structure with different neurons.}
{
\begin{tabular}{@{}lccc|cccc|cc@{}}
\toprule
Datasets   & HAE-X          & QAE-X          & AE-X           & HAE-Y          & QAE-Y          & AE-Y           & HAE\_Yc        & HAE\_Iq        & HAE\_Ic        \\ \midrule
arrhythmia & \textbf{0.758} & 0.742          & 0.729          & \textbf{0.740} & 0.735          & 0.724          & 0.736          & \textbf{0.745} & 0.734          \\
glass      & \textbf{0.582} & 0.576          & 0.556          & 0.589          & \textbf{0.636} & 0.574          & 0.574          & \textbf{0.682} & 0.572          \\
musk       & 0.775          & \textbf{0.776} & 0.770          & 0.773          & \textbf{0.917} & 0.771          & 0.773          & \textbf{0.773} & 0.770          \\
optdigits  & \textbf{0.514} & 0.443          & 0.431          & \textbf{0.515} & 0.433          & 0.432          & 0.431          & \textbf{0.481} & 0.431          \\
pendigits  & \textbf{0.757} & 0.717          & 0.712          & 0.726          & \textbf{0.818} & 0.711          & 0.715          & \textbf{0.753} & 0.715          \\
pima       & \textbf{0.650} & 0.626          & 0.557          & \textbf{0.629} & 0.573          & 0.555          & 0.555          & \textbf{0.639} & 0.553          \\
verterbral & 0.495          & 0.504          & \textbf{0.582} & 0.574          & 0.572          & 0.586          & \textbf{0.589} & \textbf{0.593} & 0.589          \\
wbc        & \textbf{0.829} & 0.723          & 0.699          & \textbf{0.704} & 0.695          & 0.700          & 0.703          & \textbf{0.712} & 0.700          \\
ALOI       & \textbf{0.518} & \textbf{0.518} & 0.515          & \textbf{0.529} & 0.517          & 0.516          & 0.516          & \textbf{0.528} & 0.516          \\
Ionosphere & \textbf{0.832} & 0.827          & 0.812          & \textbf{0.816} & 0.815          & 0.806          & 0.806          & \textbf{0.816} & 0.807          \\
KDDCUP99   & \textbf{0.870} & 0.713          & 0.724          & \textbf{0.872} & 0.727          & 0.729          & \textbf{0.872} & \textbf{0.871} & 0.726          \\
Shuttle    & 0.513          & 0.509          & \textbf{0.518} & 0.512          & 0.511          & \textbf{0.519} & 0.469          & 0.512          & \textbf{0.522} \\
Waveform   & \textbf{0.557} & 0.546          & 0.537          & 0.548          & \textbf{0.558} & 0.541          & 0.544          & \textbf{0.547} & 0.537          \\
WDBC       & \textbf{0.859} & 0.747          & 0.747          & 0.748          & 0.749          & 0.746          & \textbf{0.855} & 0.747          & 0.747          \\
WPBC       & 0.493          & 0.487          & \textbf{0.494} & \textbf{0.495} & 0.490          & 0.484          & 0.487          & 0.486          & \textbf{0.488} \\ \midrule
Avg.       & \textbf{0.667} & 0.630          & 0.626          & \textbf{0.651} & 0.650          & 0.626          & 0.642          & \textbf{0.659} & 0.627          \\ \bottomrule
\end{tabular}}
\vspace{-0.2cm}
\label{tab:ablapre}
\end{table*}

\begin{table*}[h]
\centering
\caption{Comparison of AUC for different quadratic functions on some datasets. \textbf{Bold-faced} numbers are the best compared in every structure.}
\scalebox{0.9}{
\begin{tabular}{@{}llccccccc|c@{}}
\toprule
Methods              & Quadratic Function & {Arrhythmia} & {Glass} & {Optdigits} & {ALOI} &{Shuffle} & {Waveform} & {XJTU} & {Avg.} \\ \midrule
\multirow{3}{*}{HAE-X} & $(\x^\top\w^r+b^r) ( \x^\top\w^g+b^g)$               & 0.817                          & 0.591                     & 0.627                         & 0.552                    & 0.471                       & 0.682                        & 0.942                    & 0.669                    \\
                       & $\x^\top\w^r +(\x \odot \x)^\top \w^b +c$             & 0.830                          & 0.580                     & 0.676                         & 0.554                    & 0.471                       & 0.687                        & 0.945                    & 0.678                    \\
                       & Standard            & \textbf{0.832}                 & \textbf{0.605}            & \textbf{0.772}                & \textbf{0.556}           & \textbf{0.473}              & \textbf{0.692}               & \textbf{0.957}           & \textbf{0.698}           \\ \midrule
\multirow{3}{*}{HAE-Y} & $(\x^\top\w^r+b^r) ( \x^\top\w^g+b^g)$               & 0.820                          & 0.578                     & 0.659                         & 0.553                    & 0.453                       & 0.689                        & 0.942                    & 0.671                    \\
                       & $\x^\top\w^r +(\x \odot \x)^\top \w^b +c$             & \textbf{0.829}                 & 0.571                     & 0.661                         & \textbf{0.554}                    & 0.452                       & \textbf{0.695}               & 0.943                    & 0.672                    \\
                       & Standard             & 0.816                          & \textbf{0.608}            & \textbf{0.787}                & {0.547}           & \textbf{0.500}              & 0.651                        & \textbf{0.958}           & \textbf{0.695}           \\ \midrule
\multirow{3}{*}{HAE-I} & $(\x^\top\w^r+b^r) ( \x^\top\w^g+b^g)$               & \textbf{0.818}                 & 0.567                     & 0.627                         & 0.551                    & 0.450                       & 0.671                        & 0.944                    & 0.661                    \\
                       & $\x^\top\w^r +(\x \odot \x)^\top \w^b +c$             & 0.815                          & 0.568                     & 0.628                         & \textbf{0.553}           & 0.450                       & 0.670                        & \textbf{0.966}           & 0.664                    \\
                       & Standard             & 0.817                          & \textbf{0.591}            & \textbf{0.668}                & \textbf{0.553}           & \textbf{0.495}              & \textbf{0.680}               & 0.961                    & \textbf{0.681}           \\ \bottomrule
\end{tabular}}
\vspace{-0.2cm}
\label{tab:quadratic_auc}
\end{table*}

\begin{table*}[h]
\centering
\caption{Comparison of precision for different quadratic functions on some datasets. \textbf{Bold-faced} numbers are the best compared in every structure.}
\scalebox{0.9}{
\begin{tabular}{@{}llccccccc|c@{}}
\toprule
Methods              & Quadratic Function & {Arrhythmia} & {Glass} & \multicolumn{1}{l}{Optdigits} & {ALOI} & \multicolumn{1}{l}{Shuffle} & {Waveform} & {XJTU} & {Avg.} \\ \midrule
\multirow{3}{*}{HAE-X} & $(\x^\top\w^r+b^r) ( \x^\top\w^g+b^g)$                & 0.716                          & \textbf{0.634}            & 0.457                         & 0.516                    & 0.517                       & 0.548                        & 0.747                    & 0.591                    \\
                       & $\x^\top\w^r +(\x \odot \x)^\top \w^b +c$             & 0.742                          & 0.568                     & 0.481                         & 0.515                    & \textbf{0.523}              & 0.554                        & 0.818                    & 0.600                    \\
                       & Origin             & \textbf{0.758}                 & 0.582                     & \textbf{0.514}                & \textbf{0.581}           & 0.513                       & \textbf{0.557}               & \textbf{0.956}           & \textbf{0.637}           \\ \midrule
\multirow{3}{*}{HAE-Y} & $(\x^\top\w^r+b^r) ( \x^\top\w^g+b^g)$                & \textbf{0.753}                 & 0.572                     & 0.460                         & 0.516                    & \textbf{0.522}              & 0.550                        & 0.746                    & 0.588                    \\
                       &$\x^\top\w^r +(\x \odot \x)^\top \w^b +c$             & 0.744                          & 0.572                     & 0.475                         & 0.515                    & 0.520                       & \textbf{0.557}               & 0.852                    & 0.605                    \\
                       & Origin             & 0.740                          & \textbf{0.589}            & \textbf{0.515}                & \textbf{0.529}           & 0.512                       & 0.548                        & \textbf{0.957}           & \textbf{0.627}           \\ \midrule
\multirow{3}{*}{HAE-I} & $(\x^\top\w^r+b^r) ( \x^\top\w^g+b^g)$                & 0.741                          & 0.573                     & 0.455                         & 0.514                    & \textbf{0.551}                        & 0.539                        & 0.728                    & 0.586                    \\
                       &$\x^\top\w^r +(\x \odot \x)^\top \w^b +c $            & 0.742                          & 0.573                     & 0.453                         & 0.514                    & 0.511                       & 0.541                        & 0.923                    & 0.608                    \\
                       & Origin             & \textbf{0.745}                 & \textbf{0.682}            & \textbf{0.481}                & \textbf{0.528}           & {0.512}              & \textbf{0.547}               & \textbf{0.959}           & \textbf{0.636}           \\ \bottomrule
\end{tabular}}
\label{tab:quadratic_pre}
\end{table*}

\textbf{The form of quadratic neurons}. Quadratic neurons have different variants, such as only keeping the power terms and replacing the interaction term with a linear term. What we use in a quadratic neuron is the standard form of the quadratic function. But is it suitable for anomaly detection? Here, we investigate if the standard form of quadratic neurons fits. We have chosen several datasets for comparison, with the results for AUC and precision presented in Tables \ref{tab:quadratic_auc} and \ref{tab:quadratic_pre}, respectively. We highlight that models using standard quadratic neurons display superior performance in the majority of cases. This outcome strongly suggests that the standard quadratic neuron, with its inclusion of both power and inner-product terms, possesses the best representation capabilities. Additionally, it is worth noting that models utilizing the power term generally perform better than those omitting it. This observation highlights the important role of the power term within the quadratic neuron.

\begin{table}[H]
\caption{Comparison of AUC for different network structures on optidigits. \textbf{Bold-faced} numbers are the best result compared in every column. For hidden neurons,  V1 represents [\textit{dim(x)}/2-\textit{dim(x)}/4], V2 is [\textit{dim(x)}/2-\textit{dim(x)}/3-\textit{dim(x)}/4], V3 is [\textit{dim(x)}/2-\textit{dim(x)}/3-\textit{dim(x)}/4-\textit{dim(x)}/4] and V4 is [\textit{dim(x)}/2-\textit{dim(x)}/3-\textit{dim(x)}/3-\textit{dim(x)}/4-\textit{dim(x)}/4].}
\scalebox{0.85}{
\begin{tabular}{@{}cccccc@{}}
\toprule
  & AE  & QAE & HAE-X      & HAE-Y      & HAE-I      \\ \midrule
V1 &\textbf{0.652\footnotesize$\pm$0.017} & \textbf{0.599\footnotesize$\pm$0.023}  & \textbf{0.680\footnotesize$\pm$0.054} & \textbf{0.681\footnotesize$\pm$0.056} & 0.617\footnotesize$\pm$0.031 \\
V2 &0.613\footnotesize$\pm$0.010 & 0.528\footnotesize$\pm$0.045 & 0.669\footnotesize$\pm$0.043 & 0.653\footnotesize$\pm$0.027 & \textbf{0.636\footnotesize$\pm$0.023} \\
V3 &0.582\footnotesize$\pm$0.028 & 0.447\footnotesize$\pm$0.054 & 0.671\footnotesize$\pm$0.036 & 0.666\footnotesize$\pm$0.068 & 0.625\footnotesize$\pm$0.026 \\
V4 &0.572\footnotesize$\pm$0.008 &0.316\footnotesize$\pm$0.052 & 0.647\footnotesize$\pm$0.029 & 0.653\footnotesize$\pm$0.040 & 0.613\footnotesize$\pm$0.041 \\ \bottomrule
\end{tabular}}
\label{tab:structure}
\end{table}

\section{Analysis Experiments}



\textbf{Network depth.} As shown in Table \ref{tab:structure}, we investigate the performance of HAEs in response to different depths on the optidigits dataset. Note that increasing network layers will not tend to a better performance when the scale of the neural network is large enough. It may lead to over-fitting. Therefore, all autoencoders are based on V1 ([\textit{dim(x)}/2-\textit{dim(x)}/4]) structure in our follows experiments.

{\textbf{Network width.} How to find the best width arrangement for autoencoders is still an open question. If the width is large, the compressing effect is compromised, which causes the encoder to fail to learn the most representative features. If the width is small, the learned features are insufficient to completely represent the original data, which makes the detection based on these features inaccurate. We conduct an experiment to evaluate the validity of different width arrangements. The results are presented in Table \ref{tab:neurons}. While our recommended structure exhibits superior performance in the majority of cases, we also note that the results are quite similar. We believe that, in the context of anomaly detection where large-scale datasets are often unavailable, the number of neurons employed does not significantly impact results.}

\begin{table}[H]
\caption{Comparison of HAEs using different width arrangements in several datasets. Bold-faced numbers are better.}
\centering
\begin{tabular}{@{}llcccccc@{}}
\toprule
\multicolumn{2}{l}{Dataset}        & \multicolumn{2}{c}{Arrhythmia}  & \multicolumn{2}{c}{Shuffle}     & \multicolumn{2}{c}{XJTU}        \\ \midrule
Method                 & Structure & AUC            & PRE            & AUC            & PRE            & AUC            & PRE            \\ \midrule
\multirow{3}{*}{HAE-X} & d-d/2-d/4 & \textbf{0.832} & \textbf{0.758} & \textbf{0.473} & 0.513          & \textbf{0.957} & \textbf{0.956} \\
                       & d-d/4-d/8 & 0.820          & 0.744          & 0.435          & 0.511          & 0.946          & 0.925          \\
                       & d-d-d/2   & 0.828          & 0.741          & 0.452          & \textbf{0.520} & 0.945          & 0.913          \\ \midrule
\multirow{3}{*}{HAE-Y} & d-d/2-d/4 & 0.816          & 0.740          & \textbf{0.500} & \textbf{0.512} & \textbf{0.958} & \textbf{0.957} \\
                       & d-d/4-d/8 & 0.816          & 0.732          & 0.449          & 0.511          & 0.948          & 0.935          \\
                       & d-d-d/2   & \textbf{0.827} & \textbf{0.741} & 0.421          & 0.509          & 0.945          & 0.908          \\ \midrule
\multirow{3}{*}{HAE-I} & d-d/2-d/4 & \textbf{0.817} & \textbf{0.745} & \textbf{0.495} & \textbf{0.512} & \textbf{0.961} & \textbf{0.959} \\
                       & d-d/4-d/8 & \textbf{0.817} & 0.738          & 0.454          & 0.511          & 0.962          & 0.937          \\
                       & d-d-d/2   & 0.814          & 0.737          & 0.445          & 0.508          & 0.958          & 0.947          \\ \bottomrule
\end{tabular}
\label{tab:neurons}
\end{table}

\section{Conclusions}
\label{sec:Conclusions}

In this article, we have theoretically shown that a heterogeneous network is more powerful and efficient by constructing a function that is easy to approximate for a one-hidden-layer heterogeneous network but difficult to approximate by a one-hidden-layer conventional or quadratic network. Moreover, we have proposed a novel type of heterogeneous autoencoders using quadratic and conventional neurons. Lastly, anomaly detection experiments have confirmed that a heterogeneous autoencoder delivers competitive performance relative to homogeneous autoencoders and other baselines. Future work will be designing other heterogeneous models and applying them to more real-world problems. 

\section*{Acknowledgement}
We would like to extend our heartfelt gratitude to Mr. Jianbin Zhu, esteemed Dean of the Vibration and Fault Diagnosis Laboratory at Hainan Nuclear Power Co., Ltd. His invaluable assistance in data collection has significantly contributed to our work. We deeply appreciate his generosity and support.

\small
\bibliographystyle{IEEEtran}
\bibliography{bio.bib}

\vfill

\end{document}